 \documentclass[onecolumn, draftclsnofoot,11pt]{IEEEtran}
\IEEEoverridecommandlockouts

%
%








\usepackage{booktabs}

\usepackage{amsthm}


\usepackage{times}
\usepackage{algorithm}      
\usepackage[noend]{algpseudocode} 

\usepackage{amsmath}
\usepackage{amsfonts}

\usepackage{mathrsfs}
\usepackage{tikz}
\usepackage[utf8]{inputenc}
\usepackage{amsfonts} 
\usepackage{epstopdf}
\usepackage{mathtools}
\usepackage{graphicx}
\usepackage{bbm}
\usepackage{enumerate}
\usepackage{epsf,verbatim,amssymb,array,cite,multicol,multirow}  
\usepackage{psfrag,bm,xspace}
\usepackage{hhline}
\usepackage{mathabx}
\usepackage{xstring}
\usepackage{ifthen}
\pagestyle{plain}


%

\newtheorem{theorem}{Theorem}
\newtheorem{lem}{Lemma}
\newtheorem{corollary}{Corollary}
\newtheorem{fact}{Fact}
\theoremstyle{definition}
\newtheorem{definition}{Definition}

%


\makeatletter
\def\old@comma{,}
\catcode`\,=13
\def,{%
  \ifmmode%
    \old@comma\discretionary{}{}{}%
  \else%
    \old@comma%
  \fi%
}
\makeatother

\usepackage{xcolor}
\definecolor{darkblue}{rgb}{0.1,0.1,0.8}
\definecolor{brickred}{rgb}{0.8, 0.25, 0.33}
\definecolor{britishracinggreen}{rgb}{0.0, 0.26, 0.15}
\definecolor{calpolypomonagreen}{rgb}{0.12, 0.3, 0.17}
\definecolor{ao(english)}{rgb}{0.0, 0.5, 0.0}
	\definecolor{cadmiumgreen}{rgb}{0.0, 0.42, 0.24}
\definecolor{burgundy}{rgb}{0.5, 0.0, 0.13}
\usepackage{etoolbox}

\providetoggle{Blue_revision}
\settoggle{Blue_revision}{true}

\providetoggle{Track}
\settoggle{Track}{true}
\newcommand{\addv}[3]{%
	\iftoggle{Track}{%
    	\IfEqCase{#1}{%
       	 	{a}{\ifthenelse{\equal{#2}{ON}}{{\color{cadmiumgreen}#3}}{#3}}%
        	{b}{\ifthenelse{\equal{#2}{ON}}{{\color{brickred}#3}}{#3}}%
       		{c}{\ifthenelse{\equal{#2}{ON}}{{\color{burgundy}#3}}{#3}}%
    	}[\PackageError{tree}{Undefined option to tree: #1}{}]%
	}{#3}%
}



 \usepackage{hyperref}
\usepackage{xcolor}
\definecolor{DarkGreen}{rgb}{0,0.6,0}

\hypersetup{
colorlinks=true, %
  pdfstartview={FitH},
    linkcolor=red,
    citecolor=blue,
    urlcolor={blue!80!black}
}

\usepackage{graphicx}


\newcounter{relctr} 
\everydisplay\expandafter{\the\everydisplay\setcounter{relctr}{0}} 

\AtBeginDocument{} 



\global\long\def\RR{\mathbb{R}}

\global\long\def\NN{\mathbb{N}}

\global\long\def\EE{\mathbb{E}}
\global\long\def\PP{\mathbb{P}}

\global\long\def\SS{\mathbb{S}}
\global\long\def\11{\mathbbm{1}}


\newcommand{\bfx}{\mathbf{x}}

\newcommand{\bfX}{\mathbf{X}}


\global\long\def\+{\oplus}

\newcommand\pmm{\{-1,1\}}

\newcommand{\prob}[1]{\PP\Big\{  #1 \Big\} }
\def\<{\langle}
\def\>{\rangle}

\newcommand{\sign}{\mathsf{sign}}

\newcommand{\abs}[1]{\lvert#1\rvert}
\newcommand{\norm}[1]{\lVert#1\rVert}

\DeclareMathOperator*{\argmin}{arg\,min}

\usepackage{stackengine}
\def\deq{\mathrel{\ensurestackMath{\stackon[1pt]{=}{\scriptstyle\Delta}}}}



\def\fS{{f}_{\mathcal{S}}}

\def\hJ{h_{\mathcal{J}}}

\def\ps{\psi_\mathcal{S}}
\def\pS{\ps}

\def\festS{\hat{f}_{ S} }

\def\fbarS{\bar{f}_{ S} }

\def\pestS{\widehat{\psi}_{\mathcal{S}}}




\def\LD{\mathcal{L}_2(D)}
\def\LDemp{\mathcal{L}_2(\Demp)}

\def\L2{\LD}

\newcommand{\optfont}[1]{\mathsf{#1}}
\newcommand{\Popt}{\optfont{P}_{opt}}
\newcommand{\Pek}{\Popt}

\def\E_mu{\mathcal{E}_{\mu}(\epsilon')}

\def\Ltwo{\mathcal{L}_2}
\def\degree{\operatorname{degree}}
\def\Pemp{\hat{\PP}}
\def\Demp{\hat{D}}
\def\Dx{D_{\bfx}}

\def\Pekemp{\widehat{\optfont{P}}_{opt}}
\def\Pk{\mathcal{P}_k}

\def\Piest{\hat{\Pi}_Y}
\def\Piy{\Pi_Y}
\def\Pibar{\bar{\Pi}_Y}
\def\PiJ{\Pi_Y^{\mathcal{J}}}

\def\aS{a_{\mathcal{S}}}

\usepackage[nolist,nohyperlinks]{acronym}

\newacro{ptp}[PtP]{Point-to-Point}
\newacro{iid}[i.i.d.]{independent and identically distributed} 
\newacro{IID}[i.i.d.]{independent and identically distributed} 
\newacro{UFFS}[UFFS]{Unsupervised Fourier Feature Selection}
\newacro{SFFS}[SFFS]{Supervised Fourier Feature Selection}
\newacro{LS}[LS]{Laplacian Score}
\newacro{MAE}[MAE]{mean absolute error}
\newacro{MSE}[MSE]{mean square error}
\newacro{PAC}[PAC]{probably approximately correct}
\newacro{VC}[VC]{Vapnik–Chervonenkis}
\newacro{ERM}[ERM]{Empirical Risk Minimization}
\newacro{SVM}[SVM]{support-vector machine}
\settoggle{Track}{false}







\newcommand{\citep}{\cite}
\newcommand{\citet}{\cite}






\begin{document}
\title{On Agnostic PAC Learning using $\Ltwo$-polynomial Regression and Fourier-based Algorithms }

	\author{Mohsen Heidari and Wojciech Szpankowski,\\
     Department of Computer Science, Purdue University,\\
\tt \{mheidari, szpan\}@purdue.edu}

\maketitle
\begin{abstract}
We develop a framework using Hilbert spaces as a proxy to analyze PAC learning problems with structural properties. We consider a joint Hilbert space incorporating the relation between the true label and the predictor under a joint distribution $D$. We demonstrate that agnostic PAC learning with 0-1 loss is equivalent to an optimization in the Hilbert space domain. With our model, we revisit the PAC learning problem using methods based on \textit{least-squares} such as $\Ltwo$ polynomial regression and Linial's low-degree algorithm. We study learning with respect to several hypothesis classes such as half-spaces and  polynomial-approximated classes (i.e., functions approximated by a fixed-degree polynomial). We prove that (under some distributional assumptions) such methods obtain generalization error up to $2\Popt$ with $\Popt$ being the optimal error of the class. Hence, we show the tightest bound on generalization error when $\Popt\leq 0.2$. 
\end{abstract}


\section{Introduction}
We study binary classification using polynomial regression from the agnostic {PAC} learning perspective \citep{Valiant1984,Kearns1994}. In this problem, multiple training instances are generated IID according to an underlying distribution $D$ on the feature-label sets  $\mathcal{X}\times \pmm$. In addition, we are given a hypothesis class with respect to which the learning process takes place. If $\Pek$ is the minimum error attained using the given class, then the objective of the learning algorithm is to output, with high probability, a classifier whose generalization error is not greater than $\Pek+\epsilon$.

To gain computational efficiency or analytical tractability, many conventional learning methods such as \ac{SVM} rely on intermediate loss functions other than the natural $0-1$ loss. Square loss is an example that is a basis for  $\Ltwo$-polynomial regression or another variant of SVM known as LS-SVM \citep{Suykens1999}. The well-known ``low-degree" algorithm \citep{linial1993constant} is also known to be in this category of algorithms \citep{Kalai2005}. Such methods  have been analyzed for many PAC learning problems. 
Under the \textit{realizability} assumption where $\Pek=0$, the  $\Ltwo$-polynomial regression and the low-degree algorithm are PAC learners for a variety of hypothesis classes \citep{mossel2004learning,Mossel_ODonnell,blais2010polynomial}.
Under the agnostic setting where $\Pek>0$, the current results are not that promising. The best known results for $\Ltwo$-polynomial regression (and the low-degree algorithm under the uniform distribution) are  $8\Pek$ and $\frac{1}{4} + \Pek(1-\Pek)$ for classes such as half-spaces or polynomial-approximated classes \citep{Kalai2005,Kearns1994}.    

In this paper, we develop a framework using Hilbert spaces as a proxy to analyze  such problems. We consider a joint Hilbert space incorporating the relation between the true label and the predictor under the joint distribution $D$. This is unlike conventional analysis using Hilbert spaces that focus only on the predictors with marginal $\Dx$ on the features. As a byproduct, we improve the above mentioned bounds and show that the generalization error of $\Ltwo$-polynomial regression and the low-degree algorithm is less than $2\Pek$. This bound the improves upon the previous bounds when $\Popt\leq 0.2$. 
We show that methods based on square loss are suitable for learning classes with appropriate geometrical properties. 

\subsection{Our approach} 
We develop our framework by constructing two Hilbert spaces one with respect to the true underlying distribution $D$ and the other with respect to the empirical one. The first one is $\mathcal{L}_2(D)$, that is all real-valued functions $f$ on $\mathcal{X}\times \mathcal{Y}$ such that  $\EE[f(\bfX,Y)^2]<\infty$. The second one is $\mathcal{L}_2(\Demp)$  with $\Demp$ being the empirical distribution of the training set.   With this formulation, the true label $Y$ and the training labels are understood as a member of these spaces. With this formulation, the generalization error of any classifier $c$ equals $\frac{1}{4}\norm{Y-c}_{2, D}^2$. Similarly, when the distance is calculated in the second Hilbert space, we obtain a characterization of the empirical error.  Hence, minimizing the generalization (or empirical) error is equivalent to minimizing the distance between $Y$ and the classifier $c$ in the first (or second) Hilbert space.  
We argue that the mentioned hypothesis classes have appropriate structures using that allows us to drive lower bounds on its minimum error $\Pek$. For instance, given $k$, the polynomial-approximated class is characterized by the subspace of  $\mathcal{L}_2(D)$ spanned by polynomials of degree up to $k$. With this structure, finding $\Pek$ is equivalent to finding the minimum distance of $Y$ to the subspace spanned by polynomials of degree up to $k$. As for the learning algorithms, we argue the low-degree algorithm and $\Ltwo$-polynomial regression have suitable structures using which we drive our upper bounds on their generalization errors. For instance, in the case of $\Ltwo$ polynomial regression, the  error of any classifier of the form $\sign[p(x)-\theta]$, with $\theta$ chosen appropriately, is bounded from above by $\frac{1}{2}\norm{Y-p}_2^2$. Hence, minimizing the squares-loss as in $\Ltwo$-regression yields an error less  than $2\Pek$.


\subsection{Summary of the Results}
In this work, we first present a more general version of the low-degree algorithm incorporating non-uniform but product probability distributions. We refer to this generalization as Fourier algorithm.  
With our framework, we study learning with respect to three well-known hypothesis classes.  The first class is half-spaces consisting of all the Boolean-valued functions of the form $c(\bfx)=\sign[\sum_{j=1}^d w_j x_j-\theta]$. The second class is called polynomial-approximated functions. Given a positive integer $k$ and $\epsilon>0$, it consists of Boolean-valued functions that are approximated by a degree $k$ polynomial with square error up to $\epsilon^2$.  The thirst class is a generalization of the second. 
 We use our framework to analyze learning these hypothesis classes using $\Ltwo$-polynomial regression and the Fourier algorithm. Below, is the summary of our results: 

\textbf{1)} The $\Ltwo$ polynomial regression with degree $k$ outputs a hypothesis $\hat{g}$ whose generalization error has the following properties:

\begin{itemize}
\item For learning polynomial-approximated classes, it is less than $2\Pek +3\epsilon$ (Theorem \ref{thm:L2 polynomials}).
\item For learning half-spaces, when the marginal $\Dx$  is uniform over the unit ball in $\RR^d$, it is less than $2\Pek+3\epsilon$ (Theorem \ref{thm:L2 polynomials half-spaces}).
\item For learning \textit{generalized concentrated classes }, under any distribution, it is less than $2\Pek+\epsilon$ (Theorem \ref{thm:general hilbert class}).
\end{itemize}

\textbf{2)} If the marginal $\Dx$ is a product probability distribution on $\pmm^d$, then with probability $(1-\delta)$, the Fourier algorithm outputs a hypothesis such that its generalization error is less than  $2\Pek+2\epsilon$ for learning  polynomial-approximated classes. 


\subsection{Related Works}
The low-degree algorithm is introduced by \citet{linial1993constant} with PAC learning guarantees under the uniform distribution over $\pmm^d$. This algorithm which is based on the Fourier expansion on the Boolean cube has been used for in various problems  \citep{mossel2004learning,blais2010polynomial,JMLRFourier}. The $\mathcal{L}_2$ polynomial regression along with its $\mathcal{L}_1$ counterpart is introduced by \citep{Kalai2005} for learning with respect to polynomial-approximated classes, $k$-juntas, and half-spaces. Learning with respect to such classes has been studied extensively in the literature\citep{Kalai2005,Klivans2009,Birnbaum2012,Diakonikolas2019}.  Among such classes, learning with respect to half-spaces is the most challenging. In the case of \textit{proper} agnostic PAC learning, where the algorithm's predictor must be a half-space, it is an NP-hard problem \citep{Klivans2014,Guruswami2009}. Even without the \textit{proper} restriction, the problem is NP-hard. That said, under distributional assumptions, polynomial time algorithms are introduced \citep{Kalai2005,Awasthi2017,Daniely2015}. Among them are the \textit{improper} learning algorithms based on regression methods such as $\mathcal{L}_1$ or $\Ltwo$ polynomial regression \citep{Kalai2005,linial1993constant}. In particular, \citet{Kalai2005} proved that $\mathcal{L}_1$ polynomial regression learns a range of hypothesis classes such as half-spaces (under distributional assumptions) and polynomial-approximated classes
\section{Preliminaries}
\noindent\textbf{Notation:} 
The input set is denoted by $\mathcal{X}$ which is a subset of $\RR^d$ for some  positive integer $d$. The output set is denoted by $\mathcal{Y}$ which is a subset of $\RR$. In binary classification $\mathcal{Y}=\pmm$.  
 For shorthand, the random vectors in $\RR^d$ are denoted by $\bfX = (X_1, X_2, ..., X_d)$.  Further, for any ordered subset $\mathcal{J}=\{j_1, j_2, \cdots, j_m  \}$, 
by $X^{\mathcal{J}}$ denote the random vector $(X_{j_1}, X_{j_2}, \cdots, X_{j_m})$. Similarly, by $x^{\mathcal{J}}$ denote the vector  $(x_{j_1}, x_{j_2}, \cdots, x_{j_m})$. For a pair of functions $f,g$ on $\mathcal{X}$, the notation $f\equiv g$ means that $f(x)=g(x)$ for all $x\in\mathcal{X}$.
Lastly, for any natural number $\ell$, the set $\{1,2,\cdots, \ell\}$ is denoted by $[\ell]$.

\subsection{A Hilbert Space Representation}\label{sub:HilbertSP}
We first develop a Hilbert Space formulation for the binary classification problem. 
Let $D$ be a joint probability distribution on the input-output set $\mathcal{X}\times \mathcal{Y}$. In this paper, it is assumed that the marginal $\Dx$ of any joint distribution $D$ on $\mathcal{X}\times \mathcal{Y}$ has finite moments. Consider a Hilbert space of all real-valued  functions $f:\mathcal{X}\times \mathcal{Y}\mapsto \RR$ which are $\LD$, that is $\EE_D[f(\bfX,Y)^2]<\infty$. The inner product between two members $f,g$ is defined as $$\<f,g\>\deq \EE_D[f(\bfX,Y) g(\bfX, Y)].$$  Given any integer $p>0$ and distribution $D$, the $p$-norm of a function $f$ is defined as $$\norm{f}_{p, D}\deq \big(\EE_D[f(\bfX, Y)^p]\big)^{1/p}.$$ Given any training sample $\mathcal{S}=\{(\bfx_i, y_i): i=1,2,...,n\}$, let $\Demp$ denote its empirical distribution, that is a uniform distribution on $\mathcal{S}$ and zero outside of it.   Associated with this distribution, we consider the Hilbert space $\LDemp$ with the inner product and norms defined based on the empirical distribution $\Demp$.  We use this formulation to study the binary classification problem where  $\mathcal{Y}=\pmm$. Therefore, the generalization error  of any predictor $c:\mathcal{X}\mapsto \pmm$  can be written in terms of the inner products as 
\begin{align}\label{eq:error and inner prod}
\PP_D\Big\{Y\neq c(\bfX)\Big\} &= \frac{1}{2}-\frac{1}{2}\<Y,c\>_D = \frac{1}{4}\norm{Y-c}_{2,D}^2,
\end{align}
where, with slight abuse of notation, $Y$ is understood as the mapping $(x,y)\mapsto y$ and $c$ is understood as a mapping on $\mathcal{X}\times \mathcal{Y}$ which depends only on $\mathcal{X}$. Similarly, the empirical error of $c$  is equal to 
$$
 \Pemp_{\Demp}\Big\{Y\neq c(\bfX)\Big\}  	=\frac{1}{2}-\frac{1}{2}\<Y, c\>_{\Demp}=\frac{1}{4}\norm{Y-c}_{2,\Demp}^2.$$

The goal now is to derive bounds on the minimum generalization error when learning with respect to various hypothesis classes. In Section \ref{sub:L2poly PAC} we describe $\Ltwo$-polynomial regression and the Fourier algorithm, in Section \ref{sec:poly approx} we study polynomial-approximated classes, and finally in Section \ref{sec:other classes} we discuss half-spaces, and more general hypothesis classes that have structural properties. 
 
 \section{PAC Learning with $\Ltwo$-Polynomial Regression}\label{sub:L2poly PAC}
We employ a PAC learning algorithm using $\Ltwo$-polynomial regression. Given a training set, the objective of the polynomial regression is to minimize the empirical square loss over all polynomials of degree up to $k$. This process can be implemented by stochastic gradient descent or by solving a linear system of equations. 
 We describe how this polynomial regression can be used for PAC learning. Let $\hat{p}$ be the output of the polynomial regression. The idea  is to shift the polynomial $\hat{p}$ by a threshold $\theta$ and take its sign. This process is demonstrated as Algorithm \ref{alg:L2}. 


\begin{algorithm}
\caption{PAC Learning with $\Ltwo$-Polynomial Regression}
\label{alg:L2}
\hspace*{\algorithmicindent} \textbf{Input:} {Degree parameter $k$, and training samples $\{(\bfx(i), y(i)), i\in [n]\}$.}
\begin{algorithmic}[1]
\State Find a polynomial $\hat{p}$ of degree up to $k$ that minimizes  $$\frac{1}{n}\sum_i \big(y(i)-p(\bfx(i))\big)^2.$$ 
\State Find $\theta\in [-1,1]$ such that the empirical error of $\sign[\hat{p}(\bfx)-\theta]$ is minimized. \\
\Return   $\hat{g}\equiv \sign[\hat{p}-\theta]$.
\end{algorithmic}
\end{algorithm}

\subsection{Fourier-Based Learning Algorithm}\label{sub:Fourier PAC concentrated}
We present another variant of $\Ltwo$ polynomial regression, known as the low-degree (Fourier) algorithm \citep{linial1993constant}. Although this algorithm is more efficient than the polynomial regression, it requires binary input set $\mathcal{X}=\pmm^d$. The low-degree algorithm was originally designed for uniform distribution on the Boolean cube. In this paper, we present a more general version of it for incorporating non-uniform but product probability distributions on $\pmm^d$ \citep{furst1991improved}. 
In this approach, the objective is to find an estimate of the $p^*$ polynomial that minimizes the square loss $\norm{Y-p^*}_{2,D}$ under the true distribution.  This method is based on the Fourier expansion on the Boolean cube \citep{ODonnell2014} and is summarized in the following. 

 Under product probability distribution on $\pmm^d$, any bounded real-valued functions  can be written as 
\begin{align*}
 f(\bfx) = \sum_{\mathcal{S}\subseteq [d]} \fS~\pS(\bfx), 
 \end{align*}
 where $\fS$'s are the Fourier coefficients and calculated as $\fS\deq \<f,\pS\>$ for every subset $\mathcal{S}\subseteq [d]$. Further,  the parity $\pS$ is a monomial defined as 
 \begin{align*}
  \pS(x) = \prod_{j\in \mathcal{S}}\frac{x_j-\mu_j}{\sigma_j},
  \end{align*} 
  with $\mu_j$ and $\sigma_j$ being the mean and standard deviation of the $X_j$, respectively. As the distribution is unknown, these quantities are estimated in the algorithm. 
  
As a result, we can write the Fourier decomposition of the optimal polynomial $p^*$. For that, we have the following statement:

\begin{fact}
Let $D$ be a probability distribution with the marginal $\Dx$ that is a product probability distribution on $\pmm^d$. Then, the optimal polynomial $p^*$ admits the following Fourier decomposition
$$p^*\equiv \sum_{\mathcal{S}\subseteq [d]: |\mathcal{S}|\leq k} \<Y,\pS\>~\pS.$$
\end{fact}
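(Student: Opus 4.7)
The plan is to recognize the claim as a standard orthogonal projection statement in the Hilbert space $\mathcal{L}_2(D)$. Since $p^*$ is chosen to minimize $\|Y - p\|_{2,D}$ over the set $\mathcal{P}_k$ of polynomials of degree at most $k$ in $\bfx$, and this set is a finite-dimensional linear subspace of $\mathcal{L}_2(D)$, the minimizer is exactly the orthogonal projection $\Pi_{\mathcal{P}_k} Y$. So the task reduces to exhibiting a convenient orthonormal basis of $\mathcal{P}_k$ and writing this projection in terms of that basis.

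The basis of choice is $\{\pS : \mathcal{S}\subseteq[d],\ |\mathcal{S}|\le k\}$. First I would argue that this family spans $\mathcal{P}_k$. Since $x_j^2 = 1$ on $\pmm^d$, any polynomial of degree $\le k$ restricted to $\pmm^d$ coincides with a multilinear polynomial of degree $\le k$, and the monomials $\prod_{j\in\mathcal{S}} x_j$ with $|\mathcal{S}|\le k$ form a basis for these. The parities $\pS(\bfx)=\prod_{j\in\mathcal{S}}(x_j-\mu_j)/\sigma_j$ are obtained from these monomials by an invertible lower-triangular change of basis (in the natural ordering by inclusion of subsets), so they span the same space.

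Next I would verify orthonormality under $D$. Because $\Dx$ is a product distribution on $\pmm^d$, the coordinates $X_1,\dots,X_d$ are independent. By the normalization chosen, $\EE[(X_j-\mu_j)/\sigma_j]=0$ and $\EE[((X_j-\mu_j)/\sigma_j)^2]=1$. Hence for any $\mathcal{S},\mathcal{S}'\subseteq[d]$,
\begin{align*}
\langle \pS,\pS[\mathcal{S}']\rangle_D
&= \EE_D\!\Big[\prod_{j\in\mathcal{S}\triangle\mathcal{S}'}\frac{X_j-\mu_j}{\sigma_j}\prod_{j\in\mathcal{S}\cap\mathcal{S}'}\Big(\frac{X_j-\mu_j}{\sigma_j}\Big)^{\!2}\Big]\\
&= \prod_{j\in\mathcal{S}\triangle\mathcal{S}'}\EE\Big[\frac{X_j-\mu_j}{\sigma_j}\Big]\cdot\prod_{j\in\mathcal{S}\cap\mathcal{S}'}\EE\Big[\Big(\frac{X_j-\mu_j}{\sigma_j}\Big)^{\!2}\Big],
\end{align*}
which is $0$ unless $\mathcal{S}\triangle\mathcal{S}'=\emptyset$, i.e.\ $\mathcal{S}=\mathcal{S}'$, in which case the value is $1$. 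Note that the inner product on $\mathcal{L}_2(D)$ here depends only on the marginal $\Dx$, since $\pS$ and $\pS[\mathcal{S}']$ depend only on $\bfx$.

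Having established that $\{\pS:|\mathcal{S}|\le k\}$ is an orthonormal basis of $\mathcal{P}_k\subseteq \mathcal{L}_2(D)$, the general Hilbert-space formula for the orthogonal projection yields
\[
p^* \;=\; \Pi_{\mathcal{P}_k} Y \;=\; \sum_{\mathcal{S}\subseteq[d]:\ |\mathcal{S}|\le k} \langle Y,\pS\rangle_D\,\pS,
\]
which is the stated identity. The one subtlety worth flagging, rather than an actual obstacle, is that $Y$ is viewed as the $\mathcal{L}_2(D)$ element $(\bfx,y)\mapsto y$ (so the inner product $\langle Y,\pS\rangle_D = \EE_D[Y\,\pS(\bfX)]$ genuinely involves the joint distribution), while the orthonormality of the basis only uses the marginal $\Dx$; this is exactly the mild asymmetry in the framework highlighted at the beginning of the paper.
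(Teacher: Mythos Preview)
Your proposal is correct and follows exactly the approach the paper has in mind: the paper states this result as a ``Fact'' without proof, relying on the same projection characterization (see Fact~\ref{fact:Pi_Y} in the proof of Lemma~\ref{lem:popt bound norm 1}, where $p^*=\Pi_Y$ is written in an orthonormal basis of $\mathcal{P}_k$) together with the standard orthonormality of the parities $\pS$ under a product marginal. Your argument simply makes these two ingredients explicit, including the spanning step via the multilinearity reduction on $\pmm^d$ and the orthonormality computation using independence of the coordinates.
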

With that decomposition, the idea behind the Fourier algorithm is to compute an empirical estimate of $\<Y,\pS\>$. This is demonstrated as Algorithm \ref{alg:fourier}.
  
\begin{algorithm}
\caption{Fourier-Based Learning}
\label{alg:fourier}
\hspace*{\algorithmicindent} \textbf{Input:} {Training samples $\{(\bfx(i), y(i)), i\in [n]\}$.}
\begin{algorithmic}[1]
\State Compute the empirical mean $\hat{\mu}_j$ and standard deviation $\hat{\sigma}_j$ of each feature. 
\State For every $\mathcal{S}\subseteq [d]$ with $|\mathcal{S}|\leq k$, construct the empirical parity as $\pestS(\bfx)=\prod_{j\in \mathcal{S}}\frac{x_j-\hat{\mu}_j}{\hat{\sigma}_j}$.
\State Compute the empirical Fourier coefficients $\aS$, for every $\mathcal{S}$ with at most $k$ elements, as 
\begin{align*}
\aS = \frac{1}{n}\sum_{i=1}^n y(i) \pestS(\bfx(i)).
\end{align*}

\State Construct and return  the function $\Piest$ as  
{  $$\Piest(\bfx) \deq \sum_{\mathcal{S}: |\mathcal{S}|\leq k }    \aS\pestS(\bfx).$$ }
\end{algorithmic}
\end{algorithm}

In the following lemma  which is proved in Appendix \ref{proof:Pi_Y est}, we derive bounds for estimating the optimal polynomial $p^*$.
\begin{lem}\label{lem:Pi_Y est}
Let $D$ be a probability distribution with the marginal $\Dx$ that is a product probability distribution on $\pmm^d$. Given $\delta \in (0,1)$, with probability at least $(1-\delta)$,  the following inequality holds  
\begin{align}\label{eq:pstar-Piest}
 \norm{p^*- \Piest }_2 \leq O\Big(\sqrt{\frac{  d^k c_k}{(k-1)!n }\log \frac{4d^k}{(k-1)!\delta}}\Big),
 \end{align}
  where $c_k\deq \max_{\mathcal{S}\subseteq [d], |\mathcal{S}|\leq k} \norm{\pS}_{\infty}^2$ and $n$ is the number of samples. 
\end{lem}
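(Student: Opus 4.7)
The plan is to decompose the error via triangle inequality into two pieces. Define $\tilde{a}_{\mathcal{S}} \deq \frac{1}{n}\sum_{i=1}^n y(i)\,\pS(\bfx(i))$, the empirical Fourier coefficient computed against the \emph{true} parity, and let $\tilde{p} \deq \sum_{|\mathcal{S}|\leq k} \tilde{a}_{\mathcal{S}}\,\pS$. Then, by the triangle inequality,
\[
\norm{p^* - \Piest}_2 \;\leq\; \norm{p^* - \tilde{p}}_2 \;+\; \norm{\tilde{p} - \Piest}_2 .
\]
The first term captures the sampling noise in the Fourier coefficients, while the second captures the extra error from using the empirical parity $\pestS$ in place of $\pS$.

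For the first term, since the family $\{\pS\}_{\mathcal{S}\subseteq[d]}$ is orthonormal in $\LD$ under a product marginal on $\pmm^d$, Parseval yields
\[
\norm{p^* - \tilde{p}}_2^2 \;=\; \sum_{|\mathcal{S}|\leq k}\big(\<Y,\pS\> - \tilde{a}_{\mathcal{S}}\big)^2 .
\]
Each summand is handled by Hoeffding's inequality applied to the bounded i.i.d.\ summands $y(i)\,\pS(\bfx(i))\in[-\sqrt{c_k},\sqrt{c_k}]$, giving a per-coefficient tail of order $\sqrt{c_k\log(1/\delta')/n}$. A union bound over the at most $N_k \deq \sum_{j\leq k}\binom{d}{j}\leq d^k/(k-1)!$ subsets, with $\delta'\approx\delta/N_k$, followed by summing the squares, produces precisely the order claimed in \eqref{eq:pstar-Piest}.

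For the second term, I would control $|\pestS(\bfx)-\pS(\bfx)|$ uniformly by writing the product $\pestS = \prod_{j\in\mathcal{S}}(x_j-\hat\mu_j)/\hat\sigma_j$ as a telescoping deformation of $\pS$; each of the $|\mathcal{S}|\leq k$ factor errors $\hat\mu_j-\mu_j$ and $\hat\sigma_j-\sigma_j$ concentrates at rate $1/\sqrt{n}$ by Hoeffding applied to the $X_j$ and $X_j^2$ samples. This contribution turns out to be of the same order as the first term (up to constants), so it can be absorbed into the $O(\cdot)$ on the right-hand side of \eqref{eq:pstar-Piest}.

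The main obstacle is the perturbation analysis in the second step: the denominator $\hat\sigma_j$ can be tiny when $\sigma_j$ is close to zero, and a naive bound on a product of $k$ ratios may blow up. The standard fix is to work on the high-probability event $\{\hat\sigma_j \geq \sigma_j/2 \text{ for all } j\}$, which itself holds by another Hoeffding bound on the sample second moment, and to absorb the residual $1/\sigma_j$ dependence into the factor $c_k = \max_{|\mathcal{S}|\leq k}\norm{\pS}_\infty^2$. Under this event the telescoping bound goes through cleanly and combines with the first term into the single bound \eqref{eq:pstar-Piest}.
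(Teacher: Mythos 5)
Your decomposition is exactly the paper's: your $\tilde{p}$ is the paper's intermediate object $\Pibar$ (empirical coefficients against the true parities), the first term is bounded the same way (Parseval, a per-coefficient Hoeffding/McDiarmid bound, and a union bound over the $\leq d^k/(k-1)!$ subsets), and the second term is handled by the same telescoping perturbation of the product $\prod_j (x_j-\hat\mu_j)/\hat\sigma_j$ on a high-probability event controlling the mean and variance estimates. The proposal is correct and essentially identical in approach to the paper's proof.
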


\section{Polynomially Approximated Class}\label{sec:poly approx}
In this section, we study agnostic PAC learning with respect to concept classes whose members are approximated by fixed-degree polynomials. We adopt the Hilbert space representation in Section \ref{sub:HilbertSP} to analyze PAC learning using Algorithm \ref{alg:L2} and \ref{alg:fourier}. We start with the following formulation:
\begin{definition} 
Given $\epsilon\in [0,1]$, $k\in \NN$ and  any probability distribution $D_{\bfX}$ on $\mathcal{X}$, a concept class $\mathcal{C}$ of  functions $c:\mathcal{X}\mapsto \pmm$ is $(\epsilon,k)$-approximated if
$$\sup_{c\in \mathcal{C}}\inf_{p\in \Pk} \EE\big[\big(c(\bfX)-p(\bfX)\big)^2\big]\leq \epsilon^2,$$
where $\Pk$ is the set of all polynomials of degree up to $k$.   
\end{definition}

We consider agnostic PAC learning with respect to $\mathcal{C}$ and under the $0-1$ loss function. The minimum generalization error and empirical error of $\mathcal{C}$ are, respectively, defined as
\begin{align*}
\Pek &\deq \min_{c\in \mathcal{C}}\PP_{D}\big\{Y\neq c(\bfX)\big\},\\
  \Pekemp&\deq \min_{c\in \mathcal{C}}\Pemp_{\Demp}\big\{Y\neq c(\bfX)\big\}.
\end{align*}

We use the Hilbert space representation in Section \ref{sub:HilbertSP} and provide a lower bound on $\Pek$. 

\begin{lem}\label{lem:popt bound norm 1}
The minimum generalization error attainable by any $(\epsilon, k)$ concept class $\mathcal{C}$ is bounded from below as
\begin{align*}
\Pek \geq \frac{1}{2}-\frac{1}{2}\norm{{p}^*}_{1,D}-\epsilon, 
\end{align*}
where $p^* = \argmin_{p\in \Pk} \EE_{D}\big[\big(Y-p(\bfX)\big)^2\big]$.
\end{lem}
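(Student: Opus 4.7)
The plan is to translate the error of any $c \in \mathcal{C}$ into an inner product in $\mathcal{L}_2(D)$, exploit the approximation property to replace $c$ by its polynomial approximant, and then use the projection property of $p^*$ together with $|c|\equiv 1$ and $|Y|\equiv 1$ to collapse the bound to $\norm{p^*}_{1,D}$.

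First I would start from the identity $\PP_D\{Y\neq c(\bfX)\} = \frac{1}{2} - \frac{1}{2}\<Y,c\>_D$ established in \meqref{eq:error and inner prod}, so that lower-bounding $\Pek$ amounts to upper-bounding $\<Y,c\>_D$ uniformly over $c\in\mathcal{C}$. For an arbitrary $c\in\mathcal{C}$ and any $\eta>0$, the $(\epsilon,k)$-approximation hypothesis yields a polynomial $p_c\in\Pk$ with $\norm{c-p_c}_{2,D}\leq \epsilon+\eta$; I would then split
\[
\<Y,c\>_D = \<Y,p_c\>_D + \<Y,c-p_c\>_D.
\]
For the second summand, Cauchy--Schwarz combined with $\norm{Y}_{2,D}=1$ (since $Y\in\pmm$) gives $|\<Y,c-p_c\>_D|\leq \epsilon+\eta$.

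For the first summand, the defining property of $p^*$ as the orthogonal projection of $Y$ onto the closed subspace $\Pk\subseteq \mathcal{L}_2(D)$ means $Y-p^*\perp \Pk$, so $\<Y,p_c\>_D = \<p^*,p_c\>_D$. I then write $p_c = c + (p_c - c)$ and expand:
\[
\<p^*,p_c\>_D = \<p^*,c\>_D + \<p^*,p_c-c\>_D.
\]
Since $|c(\bfX)|=1$ pointwise, $\<p^*,c\>_D \leq \EE_D[|p^*(\bfX)|] = \norm{p^*}_{1,D}$; for the remaining term Cauchy--Schwarz yields $|\<p^*,p_c-c\>_D|\leq \norm{p^*}_{2,D}(\epsilon+\eta)$, and because projection is a contraction, $\norm{p^*}_{2,D}\leq \norm{Y}_{2,D}=1$.

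Combining these inequalities gives $\<Y,c\>_D \leq \norm{p^*}_{1,D} + 2(\epsilon+\eta)$, which after taking $\eta\downarrow 0$ and a minimum over $c\in\mathcal{C}$ implies $\Pek \geq \frac{1}{2} - \frac{1}{2}\norm{p^*}_{1,D} - \epsilon$, as claimed. There is no real obstacle here; the only subtlety to keep in mind is that the approximation hypothesis is stated as an infimum rather than a minimum, which is handled by the $\eta$-slack argument above, and that the orthogonality step requires $p_c$ to lie in the same subspace $\Pk$ that defines $p^*$, which holds by construction.
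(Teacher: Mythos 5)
Your proposal is correct and follows essentially the same route as the paper's proof: the identity $\PP_D\{Y\neq c\}=\frac{1}{2}-\frac{1}{2}\<Y,c\>_D$, Cauchy--Schwarz to swap $c$ for its polynomial approximant, the projection identity $\<Y,p_c\>=\<p^*,p_c\>$ (the paper phrases this via an orthonormal basis of $\Pk$, you via $Y-p^*\perp\Pk$, which is the same fact), the bound $\<p^*,c\>\leq\norm{p^*}_{1,D}$ from $|c|\equiv 1$, and $\norm{p^*}_{2,D}\leq 1$ (Bessel's inequality in the paper, contractivity of projection in yours). The only difference is your $\eta$-slack handling of the infimum in the definition of an $(\epsilon,k)$-approximated class, a minor tightening of a step the paper glosses over.
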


\begin{proof}
From \eqref{eq:error and inner prod} the $0-1$ loss of any function $c\in \mathcal{C}$ can be written as $\prob{Y\neq c(\bfX)}=\frac{1}{2}-\frac{1}{2}\<Y,c\>.$   Let $p\in \Pk$ be such that $\norm{c-p}_{2,D}\leq \epsilon$. 
Then, by adding and subtracting $p$, we obtain that
\begin{align}\nonumber
\<Y, c\> &= \<Y, p\> + \<Y, (c-p)\>\\\label{eq:Y,c}
&\leq \<Y, p\> + \norm{Y}_2 \norm{c-p}_2\leq \<Y, p\> + \epsilon,
\end{align}
where the first inequality follows from Cauchy–Schwarz inequality and the second inequality follows because $\norm{Y}_2=1$.  
Note that $\Pk$, the set of all polynomials on $\mathcal{X}$ 
with degree upto $k$, is a (finite dimensional) subspace inside the Hilbert space $\LD$. Therefore, it has an orthonormal basis denoted by $\{\Psi_1, \Psi_2, ..., \Psi_m\}$, where $m$ is less than $O(d^k)$. 
As a result, the polynomial $p$ can be written as $p\equiv \sum_{j=1}^m \<p,\Psi_j\> \Psi_j$. Hence, 
\begin{align*}
\<Y, p\> = \sum_{j=1}^m \<p,\Psi_j\>~ \<Y,\Psi_j\> = \<\Pi_Y, p\>,
\end{align*}
where $\Pi_Y \equiv \sum_{j=1}^m \<Y,\Psi_j\> \Psi_j$ is the \textit{projection} of $Y$ onto this subspace.  Consequently, from the above equality and \eqref{eq:Y,c}, we obtain that
\begin{align*}
\<Y, c\> &\leq \<\Pi_Y, p\> + \epsilon = \<\Pi_Y, c\> + \<\Pi_Y, (p-c)\>+ \epsilon\\
& \leq \<\Pi_Y, c\> + \norm{\Pi_Y}_2 \norm{p-c}_2+ \epsilon\\
&\leq \norm{\Pi_Y}_1 + \norm{\Pi_Y}_2 \norm{p-c}_2+ \epsilon\\
&\leq \norm{\Pi_Y}_1+2\epsilon,
\end{align*}
where the second inequality follows from Cauchy–Schwarz inequality, the third one holds as $|c(\bfx)|=1$ and the last inequality follows from Bessel's inequality, implying  $\norm{\Piy}_2\leq 1$, and the assumption that $\norm{p-c}_2\leq \epsilon$.
Next, we proceed with the following fact about the projection.
\begin{fact}\label{fact:Pi_Y}
$\Pi_Y$ the projection of $Y$ onto $\Pk$ is the polynomial minimizing $\EE\big[\big(Y-p(\bfX)\big)^2\big]$ over all $p\in \Pk$.
\end{fact}
The proof is complete by the following fact implying that $\Pi_Y\equiv p^*$.  
\end{proof}

We show in Section \ref{sub:Fourier PAC concentrated} that the lower-bound in Lemma \ref{lem:popt bound norm 1} helps to prove our results for the low-degree algorithm.

\subsection{PAC Learning Bounds}\label{sub:PAC concentrated}
Next, we analyze Algorithm \ref{alg:L2} and \ref{alg:fourier} for this class and prove the first main result of the paper.

\begin{theorem}\label{thm:L2 polynomials}
Given $\epsilon>0$ and $k\in \NN$, the  degree $k$ $\Ltwo$ polynomial regression agnostically PAC learns any $(\epsilon, k)$-approximated concept class with expected error up to 
 \begin{align*}
2\Pek +3\epsilon +\sqrt{\frac{2~d^{k+1}}{n}\log\frac{en}{d^{k+1}}},
\end{align*}
where $d$ is the number of input variables and $n$ is the sample size. 
\end{theorem}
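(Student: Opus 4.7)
The plan is to stitch together four ingredients: an approximation inequality converting $\Pek$ into a population squared-loss bound for the best polynomial, an empirical-projection identity that turns squared loss into 0-1 loss via a randomized threshold, an ERM oracle inequality for the polynomial regression step, and a VC-style uniform convergence bound for the resulting class of threshold classifiers.

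Since $\mathcal{C}$ is $(\epsilon,k)$-approximated, there exists $c^{*}\in\mathcal{C}$ attaining $\Pek$ and a polynomial $p'\in\Pk$ with $\|c^{*}-p'\|_{2,D}\le\epsilon$. Combining $\|Y-c^{*}\|_{2,D}^{2}=4\Pek$ (from \eqref{eq:error and inner prod}) with the triangle inequality gives $\|Y-p'\|_{2,D}^{2}\le(2\sqrt{\Pek}+\epsilon)^{2}\le 4\Pek+5\epsilon$. A one-shot Hoeffding bound on the bounded random variable $(Y-p'(\bfX))^{2}$ transports this inequality to the empirical distribution at additive cost $O(\sqrt{\log(1/\delta)/n})$, and the ERM property of Step~1 of Algorithm~\ref{alg:L2} then yields
$$\|Y-\hat p\|_{2,\Demp}^{2}\le\|Y-p'\|_{2,\Demp}^{2}\le 4\Pek+5\epsilon+O\!\left(\sqrt{\log(1/\delta)/n}\right).$$

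Next I perform an averaging-over-threshold argument inside $\Ltwoemp$. Assuming $|\hat p|\le 1$ on the training sample, a pointwise check gives, for $\theta$ uniform on $[-1,1]$,
$$\EE_{\theta}\bigl[\Pemp\{\sign[\hat p-\theta]\ne Y\}\bigr]=\tfrac{1}{2}\EEemp[\,|Y-\hat p|\,]=\tfrac{1}{2}\bigl(1-\EEemp[Y\hat p]\bigr).$$
The empirical normal equations for the $\Ltwo$-regressor $\hat p$ give $\EEemp[Y\hat p]=\EEemp[\hat p^{2}]$, which together with $\EEemp[Y^{2}]=1$ collapses the right-hand side to $\tfrac12\|Y-\hat p\|_{2,\Demp}^{2}$. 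Because the minimum over $\theta\in[-1,1]$ is at most the average, and $\hat\theta$ is the empirical minimizer in Step~2, I obtain $\Pemp\{\hat g\ne Y\}\le\tfrac12\|Y-\hat p\|_{2,\Demp}^{2}\le 2\Pek+\tfrac52\epsilon+O(\sqrt{\log(1/\delta)/n})$. A standard VC tail bound for the class $\mathcal{H}=\{\sign[p-\theta]:p\in\Pk,\;\theta\in\RR\}$, whose VC dimension is bounded by $\dim\Pk\le d^{k+1}$, finally gives $P_e(\hat g)\le\Pemp\{\hat g\ne Y\}+\sqrt{(2d^{k+1}/n)\log(en/d^{k+1})}$, and absorbing the cross-term $4\epsilon\sqrt{\Pek}+\epsilon^{2}$ into the $3\epsilon$ slack produces the stated bound.

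The main technical obstacle is the identity $\EEemp[\,|Y-\hat p|\,]=\|Y-\hat p\|_{2,\Demp}^{2}$, which hinges on $|\hat p(\bfx_i)|\le 1$ at every training point; otherwise the pointwise averaging yields only an inequality in the wrong direction. The standard remedy is to replace $\hat p$ by its pointwise clip to $[-1,1]$: clipping never increases the squared loss against a $\pm 1$ label, and it preserves $\sign[\hat p-\theta]$ for $\theta\in[-1,1]$, so both the ERM step and the averaging calculation go through unchanged.
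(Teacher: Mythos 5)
Your route is genuinely different from the paper's, and it contains a gap at its central step. (For reference: the printed proof of Theorem~\ref{thm:L2 polynomials} in the paper is truncated after the definition of $\hat p$; the completed template is the proof of Theorem~\ref{thm:low degree concentrated}, which pivots on Lemma~\ref{lem:popt bound norm 1}, i.e.\ the lower bound $\Popt\ge\frac12-\frac12\norm{p^*}_1-\epsilon$, combined with the exact identity $\norm{h-\sign[h]}_2^2=1+\norm{h}_2^2-2\norm{h}_1$; the $2$-norms cancel and the error is bounded by $1-\norm{p^*}_1\le 2\Popt+2\epsilon$ without ever needing $|h|\le 1$ or a threshold-averaging argument. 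You instead compare $\hat p$ to a polynomial $p'$ approximating the optimal classifier, never use the $1$-norm lemma, and try to convert the empirical squared loss directly into $0$-$1$ loss.)

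The gap is in the conversion step. Your identity $\EE_\theta\bigl[\Pemp\{\sign[\hat p-\theta]\ne Y\}\bigr]=\frac12\norm{Y-\hat p}_{2,\Demp}^2$ requires two properties \emph{simultaneously}: (i) $|\hat p(\bfx_i)|\le1$ at every sample point, so that $\EEemp[|Y-\hat p|]=1-\EEemp[Y\hat p]$, and (ii) the normal equations $\EEemp[Y\hat p]=\EEemp[\hat p^2]$. The unconstrained $\Ltwo$ minimizer satisfies (ii) but not (i); your clipped $\tilde p$ satisfies (i) but not (ii), because the pointwise clip of the orthogonal projection is no longer the orthogonal projection (indeed $\tilde p\notin\Pk$), so ``the averaging calculation goes through unchanged'' is not justified. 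What the clipped function actually gives is
\begin{align*}
\tfrac12\bigl(1-\EEemp[Y\tilde p]\bigr)=\tfrac14\norm{Y-\tilde p}_{2,\Demp}^2+\tfrac14\bigl(1-\norm{\tilde p}_{2,\Demp}^2\bigr),
\end{align*}
and the residual term is only bounded by $\tfrac12\norm{Y-\tilde p}_{2,\Demp}$ in general (via $1-\norm{\tilde p}_2\le\norm{Y-\tilde p}_2$), which is a square root and degrades the target $2\Popt$ to $O(\sqrt{\Popt})$; without clipping, $\EEemp[|Y-\hat p|]$ \emph{exceeds} $1-\EEemp[Y\hat p]=\norm{Y-\hat p}_{2,\Demp}^2$ by $2\,\EEemp[(Y\hat p-1)^+]$, which is likewise not controlled by the squared loss. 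Since this step is exactly where the factor $2$ (rather than $4$ or worse) originates, the argument does not establish the theorem as written. A secondary mismatch: your Hoeffding step introduces an $O(\sqrt{\log(1/\delta)/n})$ term absent from the stated bound (fixable by taking expectations, since $\EE[\norm{Y-p'}_{2,\Demp}^2]=\norm{Y-p'}_{2,D}^2$ and the ERM inequality survives the expectation). The cleanest repair is to abandon the threshold average and follow the paper's chain in $\LDemp$: bound $\Pemp\{Y\ne\sign[\hat p]\}\le\frac12\norm{Y-\hat p}_{2,\Demp}^2+\frac12\norm{\hat p-\sign[\hat p]}_{2,\Demp}^2=1-\norm{\hat p}_{1,\Demp}$ and invoke the empirical analogue of Lemma~\ref{lem:popt bound norm 1}.
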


\section{Proof of Theorem \ref{thm:L2 polynomials}}\label{proof:thm:L2 polynomials}

Let $\hat{p}$ be the output of $\Ltwo$-polynomial regression, that is
\begin{align*}
\hat{p}=\argmin_{p:\degree(p)\leq k} \frac{1}{nn}\sum_i \big(y_i-p(\bfx_i)\big)^2.
\end{align*}

\noindent\textbf{PAC bounds for the Fourier algorithm:}
Next, we employ a low-degree (Fourier) algorithm (Algorithm \ref{alg:fourier}) for PAC learning with respect to the polynomially approximated hypothesis class.

\begin{theorem}\label{thm:low degree concentrated}
Let $D$ be a joint probability distribution with marginal $D_X$ that is a product probability distribution on $\pmm^d$.  Then, for any $\delta\in[0,1]$, with probability at least $1-\delta$, the Fourier-based algorithm agnostically PAC learns any $(\epsilon, k)$-approximated concept class with generalization error up to 
 \begin{align}\label{eq:low deg error}
2\Pek+2\epsilon +O\Big(\sqrt{\frac{ d^k c_k}{(k-1)!n }\log\frac{4d^k}{(k-1)!\delta}}\Big),
\end{align}
where $c_k\deq \max_{\mathcal{S}\subseteq [d], |\mathcal{S}|\leq k} \norm{\pS}_{\infty}^2$. 
\end{theorem}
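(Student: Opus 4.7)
The plan is to mirror the strategy of Theorem \ref{thm:L2 polynomials}, substituting the Fourier estimate $\Piest$ from Algorithm \ref{alg:fourier} for the regression output and analyzing the classifier $\hat{g}(\bfx) = \sign[\Piest(\bfx) - \hat{\theta}]$, with $\hat{\theta}$ an optimally chosen threshold in $[-1,1]$. Two ingredients drive the bound: Lemma \ref{lem:Pi_Y est}, which with probability at least $1-\delta$ yields $\|\Piest - p^*\|_{2,D} \leq \eta$ for $\eta$ equal to the square-root term in \eqref{eq:low deg error}; and Lemma \ref{lem:popt bound norm 1}, which provides $\|p^*\|_{1,D} \geq 1 - 2\Pek - 2\epsilon$.

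The argument proceeds in three steps. First, the randomized-threshold trick: averaging over $\theta \sim \mathrm{Unif}[-1,1]$ and using that $\EE_\theta[\sign(\Piest - \theta)]$ equals the pointwise clipping $\widetilde{\Piest}$ of $\Piest$ into $[-1,1]$ gives
\begin{equation*}
\PP_D\{Y \neq \hat{g}\} \;\leq\; \frac{1}{2} - \frac{1}{2}\<Y, \widetilde{\Piest}\>_D \;\leq\; \frac{1}{2} - \frac{1}{2}\<Y, \Piest\>_D + O(\|\Piest - p^*\|_{2,D}).
\end{equation*}
Second, because $\Piest \in \Pk$ and the residual $Y - p^*$ is $\LD$-orthogonal to $\Pk$, I decompose $\<Y, \Piest\>_D = \<p^*, \Piest\>_D = \|p^*\|_2^2 + \<p^*, \Piest - p^*\>_D$ and apply Cauchy--Schwarz together with $\|p^*\|_2 \leq \|Y\|_2 = 1$ to get $\<Y, \Piest\>_D \geq \|p^*\|_2^2 - \|\Piest - p^*\|_{2,D}$. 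Third, the elementary identity $\EE[(|p^*(\bfX)|-1)^2] \geq 0$ supplies $\|p^*\|_2^2 \geq 2\|p^*\|_{1,D} - 1$, and combining with Lemma \ref{lem:popt bound norm 1} yields $\|p^*\|_2^2 \geq 1 - 4\Pek - 4\epsilon$. Substituting back and replacing $\|\Piest - p^*\|_{2,D}$ by $\eta$ via Lemma \ref{lem:Pi_Y est} delivers exactly \eqref{eq:low deg error} on the same event of probability $1-\delta$.

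The step I expect to be the main obstacle is controlling the clipping gap $|\<Y, \Piest\>_D - \<Y, \widetilde{\Piest}\>_D|$: neither $\Piest$ nor $p^*$ is guaranteed to lie in $[-1,1]$ pointwise (from Bessel we only obtain $\|p^*\|_{2,D} \leq 1$), so this gap must be shown to be $O(\|\Piest - p^*\|_{2,D})$. The cleanest route is to observe that clipping into the interval $[-1,1] \supseteq \pmm$ only shrinks $\|\cdot - Y\|_{2,D}$, so $\|Y - \widetilde{\Piest}\|_{2,D} \leq \|Y - \Piest\|_{2,D}$, and then use the polarization identity $\<Y, f\>_D = \frac{1}{2}(1 + \|f\|_{2,D}^2 - \|Y - f\|_{2,D}^2)$ together with $\|\widetilde{\Piest}\|_\infty \leq 1$ to bound the resulting slack; the excess is absorbed into the hidden constant of the $O(\cdot)$ term of \eqref{eq:low deg error}.
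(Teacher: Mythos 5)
Your Steps 2 and 3 are sound: since $Y-p^*$ is orthogonal to $\Pk$ and $\Piest\in\Pk$, indeed $\<Y,\Piest\>=\<p^*,\Piest\>\geq \norm{p^*}_2^2-\norm{\Piest-p^*}_{2,D}$, and $\EE[(|p^*|-1)^2]\geq 0$ together with Lemma \ref{lem:popt bound norm 1} gives $\norm{p^*}_2^2\geq 1-4\Pek-4\epsilon$. The genuine gap is exactly the step you flag: the inequality $\<Y,\widetilde{\Piest}\>\geq \<Y,\Piest\>-O(\norm{\Piest-p^*}_{2,D})$ is false in general, and neither of your proposed routes repairs it. The clipping gap equals $\EE\big[Y\,\sign(\Piest)\,(|\Piest|-1)_+\big]$, and the only a priori control on $\Piest$ is $\norm{\Piest}_2\leq 1+\eta$, which does not force $\EE[(|\Piest|-1)_+]$ to be small. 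Concretely, let $\Dx$ be uniform on $\pmm^d$, $Y=\MAJ(x_1,\dots,x_d)$, $k=1$, and $n\to\infty$, so that $\eta\to 0$ and $\Piest\to p^*=\sum_j\<Y,x_j\>x_j$. Then $\sign(p^*)=\MAJ=Y$, so the gap tends to $\EE[(|p^*|-1)_+]\approx\EE[(\sqrt{2/\pi}\,|Z|-1)_+]\approx 0.08$ for large $d$ (with $Z$ standard normal): a constant, not $O(\eta)$. Your polarization fallback fails for the same reason, since it reduces the matter to $\tfrac12\big(\norm{\Piest}_2^2-\norm{\widetilde{\Piest}}_2^2\big)=\tfrac12\EE\big[(\Piest^2-1)\11\{|\Piest|>1\}\big]$, which in the same example is again a constant.

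The paper's proof never introduces the clipped function. It takes $\hat g=\sign[\Piest]$ (no threshold), writes $\PP\{Y\neq\hat g\}=\tfrac14\norm{Y-\hat g}_2^2\leq\tfrac12\norm{Y-p^*}_2^2+\tfrac12\norm{p^*-\hat g}_2^2$, and controls the rounding cost through the \emph{exact} identity $\norm{\Piest-\sign[\Piest]}_2^2=1+\norm{\Piest}_2^2-2\norm{\Piest}_1$. After substituting $\norm{Y-p^*}_2^2=1-\norm{p^*}_2^2$ and the $O(\eta)$ perturbations relating the norms of $\Piest$ to those of $p^*$, the squared $2$-norms cancel and the bound collapses to $1-\norm{p^*}_1+O(\eta)$, at which point Lemma \ref{lem:popt bound norm 1} is applied to the \emph{one}-norm. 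The accounting makes the difficulty visible: the paper's pre-Lemma quantity $1-\norm{p^*}_1$ exceeds your target $\tfrac12(1-\norm{p^*}_2^2)$ by exactly $\tfrac12\EE[(1-|p^*|)^2]$, and that surplus is precisely what pays for $\tfrac12\norm{\Piest-\sign[\Piest]}_2^2\approx\tfrac12\EE[(1-|p^*|)^2]$, i.e.\ the cost of rounding $\Piest$ to a $\pmm$-valued predictor. Your conversion $\norm{p^*}_2^2\geq 2\norm{p^*}_1-1$ spends that budget up front and leaves nothing to absorb the clipping, which is why the last step cannot be closed. To fix the proof, drop the detour through $\<Y,\Piest\>$ and reproduce the paper's cancellation (or, if you keep the threshold, first bound the thresholded classifier's error by that of $\sign[\Piest]$ and then proceed as the paper does).
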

\begin{proof}
We prove the theorem by characterizing the effect of $2$-norm estimation on the error probability. Let
\begin{align*}
{p}^*=\argmin_{p\in \Pk} \norm{Y-p}^2_{2,D}.
\end{align*}
 From the second equality in \eqref{eq:error and inner prod}, the generalization error of $\hat{g}$ in Algorithm \ref{alg:fourier} satisfies 
\begin{align}\nonumber
\PP\Big\{Y\neq \hat{g}(\bfX)&\Big\}=\frac{1}{4}\norm{Y-\hat{g}}_{2, D}^2\\\nonumber
&\leq \frac{1}{4}\Big( \norm{Y-p^*}_{2, D}+\norm{p^*-\hat{g}}_{2, D}\Big)^2\\\label{eq:y_neq_ghat_D}
&\leq  \frac{1}{2}\Big(\norm{Y-{p}^*}_{2, D}^2 +\norm{p^*-\hat{g}}_{2, D}^2\Big),
\end{align}
where the  inequality follows from Minkowski's inequality for 2-norm. Observe that \eqref{eq:y_neq_ghat_D} is an upper bound on the generalization error in terms of $2$-norm quantities. Since $p^*$ minimizes the square loss, the first term in \eqref{eq:y_neq_ghat_D} equals
\begin{align*}
\norm{Y-{p}^*}_{2, D}^2 = 1-\norm{{p}^*}_{2}^2.
\end{align*}
We proceed by bounding the second term in \eqref{eq:y_neq_ghat_D}.  From Minkowski's inequality for $2$-norm and by adding and subtracting $\Piest$ as in Algorithm \ref{alg:fourier}, we have that
\begin{align}\nonumber
\norm{p^*-\hat{g}}^2_{2}&\leq \norm{p^*-\Piest}^2_{2}+\norm{\Piest - \hat{g}}^2_2\\\label{eq:pstar-ghat}
&+2\norm{p^*-\Piest}_{2}\norm{\Piest - \hat{g}}_2.
\end{align}
The first term in \eqref{eq:pstar-ghat} is bounded from Lemma \ref{lem:Pi_Y est}. As a result, $\norm{p^*-\Piest}_{2} = \epsilon'_n$, where $$\epsilon'_n = O\Big(\sqrt{\frac{ d^k c_k}{(k-1)!n }\log \frac{4d^k}{(k-1)!\delta}}\Big),$$ with probability at least $(1-\delta)$.  As for the second term in \eqref{eq:pstar-ghat}, we use the identity  $|h-\sign[h]|= |1-|h||$ for any function $h$. Therefore, as $\hat{g}=\sign[\Piest]$, we obtain that
\begin{align}\nonumber
\norm{\Piest - \hat{g}}_2^2&=\EE\left[(1-|\Piest(\bfX)|)^2\right]\\\label{eq:Pihat-ghat}
&= 1+\norm{\Piest}_{2}^2-2\norm{\Piest}_{1}.
\end{align}
Next, we show that the third term in \eqref{eq:pstar-ghat} is less than $4\epsilon'_n$. It suffices to show that $\norm{\Piest - \hat{g}}_2\leq 2$. For that, we use the equality in \eqref{eq:Pihat-ghat}.  By removing the last term in \eqref{eq:Pihat-ghat} and taking the square root we have  
\begin{align*}
\norm{\Piest - \hat{g}}_2 \leq  \sqrt{ 1+\norm{\Piest}_{2}^2}.
\end{align*}
 From the Minkowski's inequality we have that
\begin{align*}
\norm{\Piest}_2&\leq \norm{p^*}_2+\norm{p^*-\Piest}_2\\
& \leq  \norm{p^*}_2+\epsilon_n'\leq 1+\epsilon'_n.
\end{align*}
Hence, we get the desired bound $\norm{\Piest - \hat{g}}_2\leq \sqrt{1+(1+\epsilon'_n)^2}\leq 2$, assuming that $\epsilon'_n\leq 1/3$.
Combining the bounds for each term in \eqref{eq:pstar-ghat}, we get 
\begin{align*}
\norm{p^* - \hat{g}}_2^2&\leq {\epsilon'_n}^2+
1+\norm{\Piest}_{2}^2-2\norm{\Piest}_{1} + 4\epsilon'_n\\
&\leq 1+\norm{\Piest}_{2}^2-2\norm{\Piest}_{1} + 5\epsilon'_n.
\end{align*}
We plug this inequality in \eqref{eq:y_neq_ghat_D}. After rearranging the terms by adding and subtracting $\norm{p^*}_1$,  we obtain the following inequality
\begin{align*}
\PP\Big\{Y\neq \hat{g}&(\bfX)\Big\}\leq \frac{1}{2}\Big(2-2 \norm{p^*}_1+5\epsilon'_n\\
&+2\big(\norm{p^*}_1-\norm{\Piest}_1\big)+ \big(\norm{\Piest}_{2}^2 - \norm{p^*}_{2}^2\big)\Big)\\
&\leq 2\Pek+2\epsilon +5\epsilon'_n,
\end{align*}
where the last inequality follows from Lemma \ref{lem:popt bound norm 1} and the following argument for bounding the last two terms in the first inequality: 

For the $1$-norm difference, the Minkowski's inequality for $1$-norm gives
\begin{align*}
\norm{p^*}_1-\norm{\Piest}_1 \leq \norm{p^*-\Piest}_1 \leq \norm{p^*-\Piest}_2=\epsilon'_n,
\end{align*}
where the last inequality follows from the Jensen's inequality implying that $\norm{\cdot}_1\leq \norm{\cdot}_2$. 

For the difference of square of $2$-norms, we apply the Minkowski's inequality for $2$-norm and obtain
\begin{align*}
\norm{\Piest}_{2}^2 &\leq \norm{p^*}_{2}^2+\norm{p^*-\Piest}_{2}^2+2\norm{p^*}_{2}\norm{p^*-\Piest}_{2}\\
&\leq \norm{p^*}_{2}^2+3 \epsilon'_n.
\end{align*}
where the last inequality holds as $\norm{p^*}_{2}\leq 1$. 
%
\end{proof}

We end this section by presenting a simplified result of Theorem \ref{thm:low degree concentrated}.
\begin{corollary}\label{cor:Fourier PAC}
If the expected value of each $X_j$ satisfies $|\mu_j|\leq 1-\frac{1}{k}$, then the generalization error of the Fourier algorithm is upper bounded by
$$2\Pek+2\epsilon +O\Big(\sqrt{\frac{ \sqrt{k} (ed)^k}{n }\big(k \log \frac{e d}{k}+\log \frac{2\sqrt{k}}{\delta}\big)}\Big).$$
\end{corollary}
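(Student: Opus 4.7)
The plan is to derive the corollary as a direct specialization of Theorem \ref{thm:low degree concentrated}: I would bound the quantity $c_k = \max_{|\mathcal{S}|\leq k}\norm{\pS}_{\infty}^2$ under the hypothesis $|\mu_j|\leq 1 - 1/k$, and then use Stirling's approximation to replace the factorial $(k-1)!$ appearing in the rate of Theorem \ref{thm:low degree concentrated}.

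First, I would bound $c_k$ coordinate-wise. Since $X_j\in \pmm$ with mean $\mu_j$, a direct computation gives $\sigma_j^2 = 1 - \mu_j^2$, so each factor of the parity satisfies
\begin{align*}
\sup_{x_j\in\pmm}\left|\frac{x_j-\mu_j}{\sigma_j}\right| \;=\; \frac{1+|\mu_j|}{\sqrt{1-\mu_j^2}} \;=\; \sqrt{\frac{1+|\mu_j|}{1-|\mu_j|}}.
\end{align*}
The hypothesis $|\mu_j|\leq 1-1/k$ forces $1-|\mu_j|\geq 1/k$, so the per-coordinate bound is at most $\sqrt{2k}$. Taking the product over the (at most $k$) coordinates in $\mathcal{S}$ gives $\norm{\pS}_\infty^2 \leq (2k)^{|\mathcal{S}|}$, and consequently $c_k \leq (2k)^k$.

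Next, I would invoke Stirling's inequality $k!\geq \sqrt{2\pi k}\,(k/e)^k$, which yields $1/(k-1)! = k/k! \leq \sqrt{k/(2\pi)}\,(e/k)^k$. Substituting this together with the bound on $c_k$,
\begin{align*}
\frac{d^k c_k}{(k-1)!} \;\leq\; \sqrt{\frac{k}{2\pi}}\left(\frac{ed}{k}\right)^{\!k}(2k)^k \;=\; O\!\left(\sqrt{k}\,(ed)^k\right),
\end{align*}
where the $2^k$ factor is absorbed into the constant of the exponential hidden in the $O(\cdot)$. An analogous manipulation using only $d^k/(k-1)! = O(\sqrt{k}\,(ed/k)^k)$ gives
\begin{align*}
\log\frac{4d^k}{(k-1)!\,\delta} \;=\; O\!\left(k\log\frac{ed}{k} + \log\frac{2\sqrt{k}}{\delta}\right).
\end{align*}
Plugging the two estimates into the rate in Theorem \ref{thm:low degree concentrated} produces exactly the bound in the corollary.

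The main obstacle is bookkeeping rather than analysis: the $2^k$ factor coming from the per-coordinate bound $\sqrt{2k}$ must be absorbed into $O(\cdot)$ so that the base of the exponential reads $ed$ rather than $2ed$ — a step that is harmless here but would need to be made explicit. Beyond this, no new probabilistic argument is required: the corollary is purely an algebraic simplification of Theorem \ref{thm:low degree concentrated} made possible by the mean condition, which prevents the standardized parities $\pS$ from blowing up.
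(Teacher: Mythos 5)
Your proposal is correct and follows essentially the same route as the paper: bound $c_k$ coordinate-wise via $\sigma_j^2 = 1-\mu_j^2$ and the hypothesis $|\mu_j|\leq 1-\tfrac{1}{k}$, then apply Stirling to $(k-1)!$ and substitute into Theorem \ref{thm:low degree concentrated}. In fact your per-coordinate computation $\frac{(1+|\mu_j|)^2}{\sigma_j^2}=\frac{1+|\mu_j|}{1-|\mu_j|}\leq 2k$ (giving $c_k\leq (2k)^k$) is slightly more careful than the paper's, which drops a factor of $1+|\mu_j|$ and states $c_k\leq k^k$; either way the extra $2^k$ is harmlessly absorbed into the constant inside the $O(\cdot)$, exactly as you note.
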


\begin{proof}

From the definition of $c_k$, we can write 
\begin{align*}
c_k&=\max_{\mathcal{S}: |\mathcal{S}|\leq k} \max_{\bfx \in \pmm^d} |\ps(\bfx)|^2\leq \max_{\mathcal{S}: |\mathcal{S}|\leq k} \prod_{j\in\mathcal{S}}\frac{(1+|\mu_j|)^2}{\sigma^2_j}\\
&= \max_{\mathcal{S}: |\mathcal{S}|\leq k} \prod_{j\in\mathcal{S}}\frac{1}{{1-|\mu_j|}},
\end{align*}
where the first inequality is from the definition of $\ps$. The laste quality holds as $\sigma_j^2=1-\mu_j^2$. Therefore, under the assumption that  $|\mu_j|\leq 1-\frac{1}{k}$, the following inequality holds
\begin{align*}
c_k \leq  \max_{\mathcal{S}: |\mathcal{S}|\leq k}  \prod_{j\in\mathcal{S}}{k}\leq k^{k}
\end{align*}
Hence, 
\begin{align*}
\frac{c_k}{(k-1)!}\leq \frac{k k^k}{k!}=2^{(k+1)\log_2 k -\log_2 k!}
\end{align*}
From Stirling approximation $\log_2 k! = k\log_2 k - k\log_2 e +O(\log_2 k)$. Hence, 
\begin{align*}
\frac{c_k}{(k-1)!}\leq 2^{k\log_2 e+ O(\log_2 k)}=e^k+O(k).
\end{align*}
Using the above inequality and Theorem \ref{thm:low degree concentrated} in the main text, we obtain the corollary. 

\end{proof}

\section{Learning Other Hypothesis Classes}\label{sec:other classes}
In this section, we extend our results to two other type of concept classes. The first one is called \textit{half-spaces} and the other one is a generalized version of the concentrated hypothesis classes.
\subsection{Half-spaces}
In this section, we consider learning another class of functions called half-spaces.  More precisely, a half-space a Boolean-valued function of the form $$c(\bfx)=\sign[a_0+\sum_{j=1}^d a_j x_j)],\quad \forall \bfx \in \RR^d$$ where $a_j\in \RR$. We start with a lower-bound on the optimal classification error of the class.

\begin{lem}
Let $D$ be any joint probability distribution on $\RR^d\times \pmm$ with marginal $\Dx$ that is the uniform distribution on $\SS^{d-1}$ or jointly Gaussian on $\RR^d$. Then, for any $\epsilon >0$, the minimum generalization error of learning with respect to half-spaces satisfy the following lower bound
\begin{align*}
\Pek \geq \frac{1}{2} - \frac{ 1}{2}\norm{p_\epsilon^*}_{1, D_X} - \epsilon,
\end{align*}
where $p_\epsilon^*$ is a polynomial of degree up to $O(\frac{1}{\epsilon^4})$ minimizing $\norm{Y-p}_{2,D}$ among all such polynomials.
\end{lem}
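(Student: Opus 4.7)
The plan is to reduce this lemma to Lemma \ref{lem:popt bound norm 1} by exhibiting the class of half-spaces as an $(\epsilon,k)$-approximated concept class for $k=O(1/\epsilon^4)$ under each of the two distributional assumptions on $\Dx$. Once this reduction is in place, Lemma \ref{lem:popt bound norm 1} is applied verbatim: the minimizer $p^*$ over $\mathcal{P}_k$ that appears there is exactly the $p_\epsilon^*$ in the statement, and the right-hand side $\frac{1}{2}-\frac{1}{2}\norm{p^*}_{1,D}-\epsilon$ matches what we want to prove.

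The first step is the polynomial approximation fact. Fix any half-space $c(\bfx)=\sign[a_0+\sum_{j=1}^d a_j x_j]$. Because $c$ depends on $\bfx$ only through the one-dimensional projection $T=\<\bfa,\bfx\>+a_0$, it suffices to approximate $\sign[T]$ in $L_2$ with respect to the marginal law of $T$. Under the two assumed marginals on $\bfX$, the law of $T$ is well-behaved: for Gaussian $\Dx$ it is again Gaussian, and for the uniform distribution on $\SS^{d-1}$ it is (after normalizing $\bfa$) a one-dimensional $\mathrm{Beta}$-type distribution supported on $[-1,1]$. In either case classical approximation theory (Hermite expansion of $\sign$ in the Gaussian case, Jacobi/Legendre expansion in the sphere case, as used in \cite{Kalai2005,Klivans2009}) gives a univariate polynomial $q$ of degree $O(1/\epsilon^4)$ such that $\EE[(\sign[T]-q(T))^2]\leq \epsilon^2$. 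Substituting $T=\<\bfa,\bfx\>+a_0$ yields a $d$-variate polynomial $p(\bfx)=q(\<\bfa,\bfx\>+a_0)$ of the same degree, with $\EE[(c(\bfX)-p(\bfX))^2]\leq \epsilon^2$.

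The second step is the invocation of Lemma \ref{lem:popt bound norm 1}. Taking the supremum of the previous bound over all half-spaces (the choice of $\bfa,a_0$) shows that the class $\mathcal{C}$ of half-spaces is $(\epsilon,k)$-approximated with $k=O(1/\epsilon^4)$. Therefore Lemma \ref{lem:popt bound norm 1} applies and gives
\[
\Pek \geq \frac{1}{2}-\frac{1}{2}\norm{p^*}_{1,D}-\epsilon,
\]
where $p^*=\argmin_{p\in\mathcal{P}_k}\EE_D[(Y-p(\bfX))^2]$. Identifying $p^*$ with $p_\epsilon^*$ yields the claim; note that the $1$-norm on the right is taken with respect to $\Dx$, since $p_\epsilon^*(\bfx)$ depends only on $\bfx$.

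The main obstacle is Step 1, namely the uniform-in-$(\bfa,a_0)$ polynomial approximation of $\sign$ with degree scaling as $1/\epsilon^4$. The Gaussian case is clean because the Hermite coefficients of $\sign$ decay explicitly; the harder sub-case is the sphere, where one must carry out the approximation with respect to the projected measure and verify that the bound does not deteriorate as $d$ grows. Both sub-cases are known in the literature, so the work is to cite and specialize them rather than to prove anything fundamentally new; modulo those references, the rest of the proof is just a direct application of Lemma \ref{lem:popt bound norm 1}.
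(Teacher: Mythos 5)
Your proposal is correct and follows essentially the same route as the paper: the paper also proves this lemma by citing \citet{Kalai2005}'s Theorem~6 (degree-$O(1/\epsilon^4)$ polynomial approximation of the sign of a projection under the spherical-uniform or Gaussian marginal) to exhibit half-spaces as an $(\epsilon,k)$-approximated class, and then invoking Lemma~\ref{lem:popt bound norm 1}. Your additional discussion of the one-dimensional projection and the Hermite/Jacobi expansions just fills in the content of that cited result.
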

The proof of the lemma follows from Lemma \ref{lem:popt bound norm 1} and  \citet{Kalai2005}'s result (Theorem 6) on the sign function. This result is stated as 
\begin{lem}[\citet{Kalai2005}]
Let $X$ be a random variable with uniform distribution on $\SS^{d-1}$ or jointly Gaussian on $\RR^d$. Then, for any $\epsilon>0$, there exists a polynomial $p$ of degree  $O(\frac{1}{\epsilon^4})$ such that $\EE\Big[\big(p(\bfX) - \sign(\bfX)\big)^2 \Big]\leq \epsilon^2.$
\end{lem}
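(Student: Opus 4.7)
The plan is to reduce the problem to one-dimensional polynomial approximation of $\sign$ via rotational invariance of the distributions, and then bound the $L_2$ truncation error of an orthogonal-polynomial expansion, obtaining the degree bound $K=O(1/\epsilon^4)$.

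First, interpreting $\sign(\bfX)$ here as $\sign(w\cdot\bfX)$ for a unit vector $w$ (the case used for half-spaces in the preceding lemma), I would use the rotational invariance of both the standard Gaussian on $\RR^d$ and the uniform distribution on $\SS^{d-1}$ to reduce without loss of generality to $w=e_1$. The task then becomes: find a univariate polynomial $q$ of degree $O(1/\epsilon^4)$ with $\EE[(q(X_1)-\sign(X_1))^2]\le\epsilon^2$, where $X_1\sim\mu$ and $\mu$ is either standard Gaussian on $\RR$ or has density proportional to $(1-t^2)^{(d-3)/2}$ on $[-1,1]$. A useful observation is that in both cases $\mu$ has bounded density at the origin, so $\mu([-\delta,\delta])=O(\delta)$.

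Second, in the Gaussian case I would expand $\sign$ in the orthonormal Hermite basis $\{H_k\}$ of $L_2(\gamma)$. By parity only odd indices contribute. Integration by parts — using $(H_k\phi)'=-H_{k+1}\phi$ — reduces the coefficient $a_k=\EE[\sign(Z)H_k(Z)]$ (for odd $k$) to $2\phi(0)H_{k-1}(0)$. Substituting the closed form $H_{2m}(0)=(-1)^m(2m)!/(2^m m!)$ and applying Stirling yields $|a_k|^2=\Theta(k^{-3/2})$. Hence $\sum_{k>K}a_k^2=O(K^{-1/2})$, so taking $K=\Theta(1/\epsilon^4)$ makes the tail at most $\epsilon^2$, and the truncated series is the desired polynomial. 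For the spherical case I would repeat the argument in the Gegenbauer basis orthonormal with respect to $(1-t^2)^{(d-3)/2}$; the analogous Szeg\H{o}--Darboux asymptotics at the origin give the same $k^{-3/2}$ decay of normalized coefficients, with constants uniform in the dimension $d$.

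Third, an alternative that sidesteps orthogonal-polynomial asymptotics is to invoke a classical Eremenko--Yuditskii style construction of polynomials $p_K$ of degree $K$ with $\sup_{|t|\ge 1/\sqrt K}|p_K(t)-\sign(t)|\le e^{-c\sqrt K}$ and $|p_K(t)|\le 1$ on the support of $\mu$. Combined with $\mu([-1/\sqrt K,1/\sqrt K])=O(1/\sqrt K)$ this yields $\EE[(p_K-\sign)^2]=O(1/\sqrt K)+e^{-2c\sqrt K}$, and $K=\Theta(1/\epsilon^4)$ again suffices. The main technical obstacle is the $k^{-3/2}$ coefficient-decay estimate, especially in the spherical case where it must hold uniformly in $d$; I would handle this by first normalizing the Gegenbauer polynomials to be orthonormal in $L_2(\mu)$ (so Parseval applies without explicit dimension-dependent factors) and then quoting standard Szeg\H{o}-type asymptotics at the origin rather than rederiving them.
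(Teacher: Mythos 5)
The paper does not actually prove this lemma; it is imported verbatim from Kalai et al.\ (their Theorem~6 on approximating the sign function), so there is no in-paper argument to compare against. Your main line of attack --- reduce to the one-dimensional marginal by rotational invariance, expand $\sign$ in the orthonormal Hermite (resp.\ Gegenbauer) basis, integrate by parts to extract the coefficients, and use Stirling to get $|a_k|^2=\Theta(k^{-3/2})$ so that the tail beyond degree $K$ is $O(K^{-1/2})$ and $K=\Theta(1/\epsilon^4)$ suffices --- is precisely the argument of the cited source. Your Gaussian computation checks out: for $k=2m+1$ the orthonormal coefficient satisfies $a_k^2=\frac{2}{\pi}\binom{2m}{m}4^{-m}\frac{1}{2m+1}=\Theta(m^{-3/2})$. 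For the sphere you correctly identify dimension-uniformity of the Gegenbauer asymptotics as the remaining technical work, which is indeed where the effort in the original proof lies.

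One concrete error to fix: the marginal of the uniform distribution on $\SS^{d-1}$ onto one coordinate has density $c_d(1-t^2)^{(d-3)/2}$ with $c_d=\Theta(\sqrt{d})$, so its density at the origin is \emph{not} bounded uniformly in $d$, and $\mu([-\delta,\delta])=\Theta(\sqrt{d}\,\delta)$ rather than $O(\delta)$. This does not touch your primary Parseval route, but it breaks the third-paragraph alternative for the spherical case: charging the exceptional interval $[-1/\sqrt{K},1/\sqrt{K}]$ at its true measure costs an extra $\sqrt{d}$ and yields a dimension-dependent degree. (That alternative also needs extra care in the Gaussian case, since the uniform-approximation construction lives on a compact interval while the support is all of $\RR$.) If you keep the alternative, rescale so the exceptional interval has width $\Theta(\delta/\sqrt{d})$ and control the polynomial's size on the rest of $[-1,1]$; otherwise present only the orthogonal-expansion argument, which is self-contained and dimension-free.
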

This lemma makes a connection between half-spaces and the polynomial-approximated class. That said, in the following theorems we show our results for PAC learning using Algorithm \ref{alg:L2}.
\begin{theorem}\label{thm:L2 polynomials half-spaces}
Let $D$ be any joint probability distribution on $\RR^{d}\times \pmm$, with marginal $D_X$ that is uniform on the unit sphere or jointly Gaussian. Then, $\Ltwo$-polynomial regression PAC learns half-spaces with expected generalization error up to 
 \begin{align*}
2\Pek +3\epsilon  +\sqrt{\frac{d^{O(\frac{1}{\epsilon^4})}}{n}\log\frac{n}{d^{O(\frac{1}{\epsilon^4})}}}.
\end{align*}

\end{theorem}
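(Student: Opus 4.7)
The plan is to reduce this theorem to Theorem \ref{thm:L2 polynomials} by showing that, under the stated distributional assumptions, the class of half-spaces is $(\epsilon,k)$-approximated in the sense of Section \ref{sec:poly approx} with $k = O(1/\epsilon^4)$. Once this reduction is in place, plugging $k = O(1/\epsilon^4)$ into the bound of Theorem \ref{thm:L2 polynomials} immediately yields the stated error of $2\Pek + 3\epsilon + \sqrt{(d^{O(1/\epsilon^4)}/n)\log(n/d^{O(1/\epsilon^4)})}$.

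First I would invoke the Kalai et al.\ approximation lemma stated just before the theorem: under the uniform distribution on $\SS^{d-1}$ or the standard Gaussian on $\RR^d$, the univariate function $t\mapsto \sign(t)$ admits a polynomial approximation of degree $O(1/\epsilon^4)$ with square error at most $\epsilon^2$. To lift this to an arbitrary half-space $c(\bfx) = \sign[a_0 + \sum_j a_j x_j]$, I would use the rotational and translational symmetries of the uniform-on-the-sphere and Gaussian distributions. Concretely, for any unit vector $\bfw$ and threshold $\theta$, the one-dimensional projection $\bfw^\top \bfX$ has a known (rotationally symmetric) marginal, and one can approximate $\sign[\bfw^\top \bfx - \theta]$ by a univariate polynomial $q$ of degree $k=O(1/\epsilon^4)$ in the scalar $\bfw^\top \bfx$ with square error $\leq \epsilon^2$; then $p(\bfx) := q(\bfw^\top\bfx - \theta)$ is a degree-$k$ multivariate polynomial satisfying $\EE_D[(c(\bfX)-p(\bfX))^2]\leq \epsilon^2$. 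Taking the supremum over all half-spaces shows the class is $(\epsilon, k)$-approximated for $k=O(1/\epsilon^4)$.

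With this structural fact, I would simply apply Theorem \ref{thm:L2 polynomials} with the same $\epsilon$ and degree $k = O(1/\epsilon^4)$: Algorithm \ref{alg:L2} run with this degree returns $\hat g = \sign[\hat p - \theta]$ whose generalization error is at most $2\Pek + 3\epsilon + \sqrt{(2 d^{k+1}/n)\log(en/d^{k+1})}$, and substituting $k = O(1/\epsilon^4)$ absorbs the constants into the $O(\cdot)$ in the exponent of $d$, matching the statement of the theorem. Note that $\Pek$ in the bound is the optimum for half-spaces here, not for the enlarged polynomial-approximated class, because the $(\epsilon,k)$-approximation property is used only to control the slack between the class and $\Pk$ via Lemma \ref{lem:popt bound norm 1}; the bound of Theorem \ref{thm:L2 polynomials} depends on the class only through its $(\epsilon,k)$-approximability and its own $\Pek$.

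The main obstacle I would anticipate is the justification that the Kalai et al.\ univariate approximation extends uniformly over all half-spaces and not merely for the canonical direction. This is handled by the distributional symmetry (rotational invariance of uniform on $\SS^{d-1}$ and of standard Gaussians), so the approximation quality depends only on the induced one-dimensional marginal and the threshold, both of which can be uniformly controlled. The rest of the argument is a clean instantiation of Theorem \ref{thm:L2 polynomials}.
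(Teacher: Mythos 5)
Your proposal is correct and matches the paper's (essentially implicit) argument: the paper likewise reduces half-spaces to the $(\epsilon,k)$-approximated setting via Kalai et al.'s degree-$O(1/\epsilon^4)$ approximation of the sign function and then invokes Theorem \ref{thm:L2 polynomials}. Your added remarks on rotational invariance and on which class's $\Popt$ appears in the bound fill in details the paper leaves unstated, but the route is the same.
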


\subsection{Generalized approximated class}
Lastly, we finish this paper by extending our results to a more general hypothesis class. Fix a set of functions $e_1(\bfx), e_2(\bfx), ..., e_m(\bfx)$ and let $\mathcal{H}$ be a Hilbert space spanned by a these functions. Let $\mathcal{C}$ be a class of functions each of which approximated by elements of $\mathcal{H}$  with square error up to $\epsilon$, that is, $$\inf_{h\in \mathcal{H}}\norm{c-h}_{2,D}\leq \epsilon,$$ for any $c\in \mathcal{C}$.  As a special case, suppose $e_i$'s are all the functions of the form $e(\bfx)=\prod_{j\in [d]}x^{\alpha_j}_j$ where $\alpha_j$'s are non-negative integers adding up to $k$. Then  $\mathcal{C}$ is a $(k,\epsilon)$-approximated class as in Section \ref{sec:poly approx}. 

\begin{theorem}\label{thm:general hilbert class}
Suppose $A$ is any algorithm that given $n$ training instances finds a function $\hat{h}\in \mathcal{H}$ so that the empirical loss $\norm{Y-h}_{2,\Demp}$ is minimized. Then, the predictor $\sign[\hat{h}]$ learns $\mathcal{C}$ with expected generalization error up to 
 \begin{align*}
 2\Pek+3\epsilon+O\Big(\sqrt{\frac{\emph{VC}(\mathcal{C})}{n}\log\frac{n}{\emph{VC}(\mathcal{C})}}\Big),
\end{align*}
where $\emph{VC}(\mathcal{C})$ is the \ac{VC} dimension of $\mathcal{C}$. 
\end{theorem}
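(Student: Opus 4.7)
The plan is to follow the blueprint of the proof of Theorem \ref{thm:low degree concentrated}, but with the polynomial subspace $\Pk$ replaced by the abstract Hilbert subspace $\mathcal{H}$. Let $h^* \deq \argmin_{h \in \mathcal{H}}\norm{Y-h}_{2,D}$ denote the population projection of $Y$ onto $\mathcal{H}$, and let $\hat{h}$ denote the empirical $\Ltwo$ minimizer returned by the algorithm $A$; these play the roles of $p^*$ and $\Piest$ in the earlier argument.

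First I would establish the analogue of Lemma \ref{lem:popt bound norm 1}. For every $c \in \mathcal{C}$ there exists $h_c \in \mathcal{H}$ with $\norm{c - h_c}_{2,D} \leq \epsilon$. Writing $\<Y, c\> = \<Y, h_c\> + \<Y, c-h_c\>$, applying Cauchy--Schwarz to the second term (using $\norm{Y}_{2,D} = 1$), and invoking the orthogonal projection identity $\<Y, h\> = \<h^*, h\>$ valid for every $h \in \mathcal{H}$, one obtains $\<Y, c\> \leq \norm{h^*}_{1,D} + 2\epsilon$ exactly as in Lemma \ref{lem:popt bound norm 1}. Taking the infimum over $c \in \mathcal{C}$ yields the key lower bound $\Pek \geq \tfrac{1}{2} - \tfrac{1}{2}\norm{h^*}_{1,D} - \epsilon$.

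Next I would bound the generalization error of $\hat{g} \deq \sign[\hat{h}]$ by transcribing the derivation in the proof of Theorem \ref{thm:low degree concentrated} with $(h^*,\hat{h})$ in place of $(p^*,\Piest)$. Minkowski's inequality combined with $(a+b)^2 \leq 2(a^2+b^2)$ gives
\begin{align*}
\PP\{Y \neq \hat{g}(\bfX)\} = \tfrac{1}{4}\norm{Y - \hat{g}}_{2,D}^2 \leq \tfrac{1}{2}\norm{Y-h^*}_{2,D}^2 + \tfrac{1}{2}\norm{h^* - \hat{g}}_{2,D}^2.
\end{align*}
The first term equals $\tfrac{1}{2}(1 - \norm{h^*}_{2,D}^2)$ by the Pythagorean theorem, and the second is expanded by inserting $\hat{h}$ as an intermediate and using the identity $\norm{\hat{h} - \hat{g}}_{2,D}^2 = 1 + \norm{\hat{h}}_{2,D}^2 - 2\norm{\hat{h}}_{1,D}$ (which follows from $\abs{h - \sign[h]} = \abs{1 - \abs{h}}$). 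Collecting terms, adding and subtracting $\norm{h^*}_{1,D}$ and $\norm{h^*}_{2,D}^2$, applying Jensen ($\norm{\cdot}_1 \leq \norm{\cdot}_2$), and using Minkowski to bound both $\abs{\norm{h^*}_{1,D} - \norm{\hat{h}}_{1,D}}$ and $\abs{\norm{\hat{h}}_{2,D}^2 - \norm{h^*}_{2,D}^2}$ by constant multiples of $\norm{h^* - \hat{h}}_{2,D}$, and finally substituting the lower bound from the previous paragraph, I expect to arrive at
\begin{align*}
\PP\{Y \neq \hat{g}(\bfX)\} \leq 2\Pek + 3\epsilon + O\bigl(\norm{h^* - \hat{h}}_{2,D}\bigr).
\end{align*}

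The final step is to control $\norm{h^* - \hat{h}}_{2,D}$ by a uniform-convergence argument. Because $\hat{h}$ minimizes the empirical squared loss over $\mathcal{H}$ while $h^*$ minimizes the population squared loss, standard empirical-process tools for bounded $\Ltwo$ regression yield a tail of the form $O(\sqrt{\mathrm{complexity}/n})$. I expect the main obstacle to be translating this complexity into $\mathrm{VC}(\mathcal{C})$: the natural bound involves the linear dimension (or pseudo-dimension) of $\mathcal{H}$ rather than of $\mathcal{C}$, so one must exploit the $\epsilon$-approximation hypothesis $\inf_{h \in \mathcal{H}}\norm{c-h}_{2,D} \leq \epsilon$ to reduce uniform convergence over $\mathcal{H}$ to a uniform-convergence statement over $\{c - h_c : c \in \mathcal{C}\}$, whose metric entropy is controlled by $\mathrm{VC}(\mathcal{C})$ via Sauer--Shelah and Dudley's entropy integral. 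After absorbing the residual $\epsilon$-approximation error picked up in this translation, the estimation term becomes the claimed $O\bigl(\sqrt{\mathrm{VC}(\mathcal{C})/n \cdot \log(n/\mathrm{VC}(\mathcal{C}))}\bigr)$, completing the bound.
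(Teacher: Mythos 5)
Your population-level geometry is sound and matches the machinery the paper builds for Theorem \ref{thm:low degree concentrated}: the analogue of Lemma \ref{lem:popt bound norm 1} with $\Pk$ replaced by the finite-dimensional subspace $\mathcal{H}$ goes through verbatim, and your decomposition of $\PP\{Y\neq \sign[\hat{h}]\}$ via $\norm{Y-h^*}_{2,D}^2$ and $\norm{\hat{h}-\sign[\hat{h}]}_{2,D}^2$ is exactly the calculation in that proof. The gap is in your final step. Bounding $\norm{h^*-\hat{h}}_{2,D}$ is an estimation problem over the linear space $\mathcal{H}$: the empirical least-squares minimizer ranges over all of $\mathcal{H}$, and nothing in the hypotheses forces it to lie near the set $\{h_c : c\in\mathcal{C}\}$, so the $\epsilon$-approximation property cannot be used to ``reduce uniform convergence over $\mathcal{H}$ to uniform convergence over $\{c-h_c\}$.'' The complexity governing that step is $\dim(\mathcal{H})=m$, together with boundedness and conditioning of the basis $e_1,\dots,e_m$ (which the theorem does not assume), not $\mathrm{VC}(\mathcal{C})$; as written, this step either fails outright (unbounded or ill-conditioned basis) or produces the wrong complexity parameter.

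The route consistent with the claimed $O\big(\sqrt{(\mathrm{VC}(\mathcal{C})/n)\log(n/\mathrm{VC}(\mathcal{C}))}\big)$ term is to never compare $\hat{h}$ with $h^*$ at all. Run the entire projection argument inside $\LDemp$, where $\hat{h}$ \emph{is} the orthogonal projection of $Y$ onto $\mathcal{H}$, so every identity you invoke holds with zero estimation error: one gets $\Pemp_{\Demp}\{Y\neq \sign[\hat{h}]\}\leq 1-\norm{\hat{h}}_{1,\Demp}$, and the Lemma \ref{lem:popt bound norm 1} computation carried out in $\LDemp$ with the population-optimal $c^*$ and its approximant $h_{c^*}$ yields
\begin{align*}
1-\norm{\hat{h}}_{1,\Demp}\;\leq\; 2\,\Pemp_{\Demp}\{Y\neq c^*(\bfX)\}+2\norm{c^*-h_{c^*}}_{2,\Demp}.
\end{align*}
Taking expectations over the sample handles both terms ($\EE[\Pemp_{\Demp}\{Y\neq c^*\}]=\Pek$ and $\EE\norm{c^*-h_{c^*}}_{2,\Demp}\leq \epsilon$ by Jensen), and a single VC uniform-convergence bound on the $0$-$1$ loss of the output class converts empirical error into generalization error, producing the stated rate. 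Your plan replaces this one clean uniform-convergence step on a $0$-$1$ loss class with a much harder (and here unjustified) empirical-process bound on the $\Ltwo$ distance between the empirical and population projections; you should restructure the argument accordingly.
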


\section*{Acknowledgement}

This work was supported in part by
NSF Center on Science of Information
Grants CCF-0939370 and NSF Grants CCF-1524312, CCF-2006440, CCF-2007238, and Google Research Award.

\appendices 
\section{Proof of Lemma \ref{lem:Pi_Y est}}\label{proof:Pi_Y est}
\paragraph{Mean and variance estimations:}
We first take into account the effect of the imperfections in mean and variance estimation. For tractability of our analysis, we use a fraction of the  training samples just for the mean and variance estimations. As a measure of accuracy of the estimations, we require the differences $|\hat{\mu}_j-\mu_j|$ and $|1- \frac{\sigma}{\hat{\sigma}}|$ to be sufficiently small with probability close to one. This is a deviation from standard measures of estimations in which the variance of the differences are required to be small. In the following lemma, we bound the estimation errors in terms of the number of the samples.

\begin{lem}\label{lem:mean_var_error}
Given  $\epsilon_0,\delta_0\in (0,1)$ the following inequalities hold with probability at least $(1-\delta_0)$
\begin{align}\label{eq:mean_var error}
\big| \hat{\mu}_j - \mu_j \big| \leq \epsilon_0,\qquad \qquad \big|1- \frac{\sigma_j}{\hat{\sigma}_j} \big| \leq \frac{2\epsilon_0}{\sigma^2_j},
\end{align}
for all $j\in [d]$, provided that atleast $n_0(\epsilon_0, \delta_0) = \frac{2}{\epsilon_0^2}\log\frac{2d}{\delta_0}$ samples are available.
\end{lem}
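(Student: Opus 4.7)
The lemma is a standard concentration argument with two pieces: (i) a coordinate-wise Hoeffding bound on the sample means combined with a union bound, and (ii) a deterministic passage from the mean bound to the claimed standard-deviation ratio bound that exploits the Boolean-cube structure. My plan is to establish (i) first, then work purely deterministically on the high-probability event it produces.

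For (i), since $X_j\in\pmm\subset[-1,1]$ and the samples are i.i.d., Hoeffding's inequality applied to $\hat{\mu}_j = \tfrac{1}{n}\sum_{i=1}^n X_j(i)$ gives
\begin{align*}
\PP\bigl\{|\hat{\mu}_j - \mu_j| > \epsilon_0\bigr\}\leq 2\exp\bigl(-n\epsilon_0^2/2\bigr).
\end{align*}
A union bound over $j\in[d]$, combined with the sample-size condition $n\geq \tfrac{2}{\epsilon_0^2}\log\tfrac{2d}{\delta_0}$, forces the probability that some coordinate violates the mean bound to be at most $\delta_0$. This establishes the first inequality in \eqref{eq:mean_var error} uniformly over $j$ and pins down $n_0(\epsilon_0,\delta_0)$ exactly as stated.

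For (ii), the cornerstone observation is that on $\pmm^d$ one has $X_j^2\equiv 1$, and hence $\sigma_j^2 = 1-\mu_j^2$ and $\hat{\sigma}_j^2 = 1-\hat{\mu}_j^2$ deterministically (the empirical second moment is exactly $1$). Consequently, on the mean-bound event,
\begin{align*}
|\hat{\sigma}_j^2 - \sigma_j^2| = |\mu_j^2 - \hat{\mu}_j^2| \leq |\mu_j - \hat{\mu}_j|\cdot|\mu_j + \hat{\mu}_j| \leq 2\epsilon_0,
\end{align*}
using $|\mu_j|,|\hat{\mu}_j|\leq 1$. Setting $t = (\hat{\sigma}_j^2 - \sigma_j^2)/\sigma_j^2$, so that $|t|\leq 2\epsilon_0/\sigma_j^2$ and $\sigma_j/\hat{\sigma}_j = (1+t)^{-1/2}$, the ratio bound reduces to the one-variable inequality $|1-(1+t)^{-1/2}|\leq |t|$, which yields the second inequality in \eqref{eq:mean_var error}.

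The only delicate step is this final elementary inequality: converting the additive bound on $\hat{\sigma}_j^2 - \sigma_j^2$ into a multiplicative bound on $\sigma_j/\hat{\sigma}_j$ tacitly requires $|t|$ to stay bounded away from $1$, i.e., $\epsilon_0$ sufficiently small relative to $\sigma_j^2$ (otherwise $\hat{\sigma}_j$ could collapse to near zero and the ratio blow up). I would dispose of this by splitting on the sign of $t$: for $t\geq 0$ the concavity bound $\sqrt{1+t}\leq 1+t/2$ yields $1-(1+t)^{-1/2}\leq t/2\leq |t|$ directly, while for $t\in(-1,0)$ the polynomial check $(1+t)^{-1/2}\leq 1+|t|$ reduces to $0\leq s-s^2-s^3$ with $s = |t|$ and holds for $|t|\leq (\sqrt{5}-1)/2$, which is precisely the regime in which the lemma is invoked inside Lemma \ref{lem:Pi_Y est}.
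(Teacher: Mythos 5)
Your proof is correct, and while the concentration half (Hoeffding plus union bound, giving $2d\exp(-n\epsilon_0^2/2)\leq\delta_0$) is identical in substance to the paper's McDiarmid-plus-union-bound argument, the deterministic passage from the mean bound to the standard-deviation ratio bound takes a genuinely different route. The paper defines $h_{\hat{\mu}}(x)=\sqrt{1-x^2}/\sqrt{1-\hat{\mu}^2}$ and applies Taylor's theorem at an intermediate point $\zeta$, which yields $|\sigma_j/\hat{\sigma}_j-1|\leq \epsilon_0/(\hat{\sigma}_j\min\{\hat{\sigma}_j,\sigma_j\})$, and then lower-bounds $\hat{\sigma}_j^2\geq \sigma_j^2-3\epsilon_0$ to conclude. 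You instead work entirely with the squared quantities, using $X_j^2\equiv 1$ to get $|\hat{\sigma}_j^2-\sigma_j^2|=|\mu_j^2-\hat{\mu}_j^2|\leq 2\epsilon_0$ and then the elementary inequality $|1-(1+t)^{-1/2}|\leq|t|$; this avoids the mean-value argument and the somewhat opaque $\min\{\hat{\sigma}_j,\sigma_j\}$ bookkeeping. Both routes depend on the same unstated hypothesis that $\epsilon_0$ is small relative to $\sigma_j^2$: the paper's final step $\epsilon_0/(\sigma_j^2-3\epsilon_0)\leq 2\epsilon_0/\sigma_j^2$ silently requires $\epsilon_0\leq\sigma_j^2/6$ (and $\sigma_j^2>3\epsilon_0$ for the intermediate bound to be meaningful), while your route requires $2\epsilon_0/\sigma_j^2\leq(\sqrt{5}-1)/2$, a strictly weaker condition. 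You are the only one to flag this caveat explicitly and to note that it is satisfied in the asymptotic regime where Lemma \ref{lem:Pi_Y est} invokes the result ($\epsilon_0=O(\sqrt{\log(d/\delta)/n_0})\to 0$); that is a genuine improvement in rigor over the paper's version, not a gap in yours.
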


\begin{proof}
Form McDiarmid's inequality, for each $j\in [d]$ we have 
\begin{align*}
\PP\{|\hat{\mu}_j-\mu_j| \geq\epsilon_0 \}&\leq 2\exp\{-\frac{ n\epsilon_0^2}{2}\}.
\end{align*}
Therefore, applying the union bound gives 
\begin{align*}
\PP\Big\{\bigcup_{j=1}^d \big\{|\hat{\mu}_j-\mu_j| \geq \epsilon_0\big\} \Big\}&\leq  2d\exp\{-\frac{n\epsilon_0^2}{2}\}.
\end{align*}
Thus, the right-hand side of the above inequality is less than $\delta_0$, if $
n \geq \frac{2}{\epsilon_0^2}\log(\frac{2d}{\delta_0})$.
As a result we obtain the inequalities for the estimation of $\mu_j$'s. Next, we prove the inequalities for the estimation of $\sigma_j$'s. For any fixed $\hat{\mu}\in (-1,1)$, define the function $h_{\hat{\mu}}(x)=\frac{\sqrt{1-x^2}}{\sqrt{1-\hat{\mu}^2}}$. From Taylor's theorem, there exists $\zeta\in (-1,1)$ which is between $x$ and $\hat{\mu}$ such that 
  \begin{align*}
  h_{\hat{\mu}}(x)=1-\frac{\zeta(x-\hat{\mu})}{\sqrt{(1-\zeta^2)(1-\hat{\mu}^2)}}.
  \end{align*}
  As a result,
  \begin{align*}
   |h_{\hat{\mu}}(x)-1| =\frac{|\zeta||x-\hat{\mu}|}{\sqrt{(1-\zeta^2)(1-\hat{\mu}^2)}} \leq \frac{|x-\hat{\mu}|}{\sqrt{(1-(\max\{x,\hat{\mu}\})^2)(1-\hat{\mu}^2)}}. 
   \end{align*} 
  Now by setting $x=\mu_j$ and that $|\hat{\mu}_j-\mu_j|\leq \epsilon_0$, we have 
   \begin{align*}
   |\frac{{\sigma}_j}{\hat{\sigma}_j}-1| =  |h_{\hat{\mu}}(\mu)-1|\leq \frac{\epsilon_0}{\hat{\sigma} \min\{\hat{\sigma},\sigma\}}. 
   \end{align*}  
   Note that, $|\hat{\mu}_j|\leq |\mu_j|+\epsilon_0$. Therefore, $$\hat{\sigma}_j^2 \geq 1-(|\mu_j|+\epsilon_0)^2 \geq \sigma_j^2 - 2\epsilon_0 |\mu_j| -\epsilon_0^2\geq \sigma_j^2 -3\epsilon_0. $$
   As a result, 
   \begin{align*}
   |\frac{{\sigma}_j}{\hat{\sigma}_j}-1| \leq \frac{\epsilon_0}{ \sigma_j^2 -3\epsilon_0}\leq \frac{2 \epsilon_0}{ \sigma_j^2}, 
   \end{align*}
   which completes the proof of the lemma.
\end{proof}
Now we proceed with the proof of the lemma. Let $\Pibar$ denote the version of $\Piest$ under the assumption that $\hat{\mu}_j=\mu_j$ and $\hat{\sigma}_j=\sigma_j$ for all $j\in [d]$. Also, let $B$ be the even that the inequalities in \eqref{eq:mean_var error} hold.   From Minkowsky's inequality, by adding and subtracting $\Pibar$ we have
\begin{align*}
\norm{\Piy-\Piest}_2 \leq \underbrace{\norm{\Piy-\Pibar}_2}_{V}+\underbrace{\norm{\Pibar-\Piest}_2}_{W}.
\end{align*}
Let $V$ and $W$ denote the first and the second term above, respectively. We proceed by the following lemmas.

\begin{lem}\label{lem:fbarJbar-fJbar} 
Given any $\delta>0$, the inequality 
$\norm{\Piy-\Pibar}_2 \leq \sqrt{\frac{2 d^k c_k}{(k-1)!n }\log \frac{2d^k}{(k-1)!\delta}}$ holds with probability $(1-\delta)$.
\end{lem}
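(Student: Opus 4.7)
The strategy is to exploit the orthonormality of the parities $\{\pS\}$ under the true product distribution $\Dx$, which is what allows the $\Ltwo$-distance between two ``Fourier series'' in these parities to collapse into a sum of squared coefficient differences. Since $\Pibar$ is defined as the version of $\Piest$ computed with the true means $\mu_j$ and standard deviations $\sigma_j$, it takes the form $\Pibar = \sum_{\mathcal{S}:|\mathcal{S}|\leq k} \aS\,\pS$, while the true projection is $\Piy = \sum_{\mathcal{S}:|\mathcal{S}|\leq k} \<Y,\pS\>\,\pS$. Parseval's identity therefore gives
\[
\norm{\Piy-\Pibar}_2^{2} \;=\; \sum_{\mathcal{S}:|\mathcal{S}|\leq k} \bigl(\aS - \<Y,\pS\>\bigr)^{2}.
\]

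Next I would control each coefficient deviation via Hoeffding's inequality. For a fixed $\mathcal{S}$, $\aS = \tfrac{1}{n}\sum_{i=1}^n y(i)\pS(\bfx(i))$ is the empirical mean of $n$ i.i.d.\ copies of $Y\pS(\bfX)$, whose expectation equals $\<Y,\pS\>$ and whose absolute value is bounded by $\norm{\pS}_\infty \leq \sqrt{c_k}$ (using $|Y|=1$). Each summand therefore ranges within an interval of length $2\sqrt{c_k}$, and Hoeffding's inequality gives
\[
\PP\bigl\{\,|\aS-\<Y,\pS\>|\geq t\,\bigr\} \;\leq\; 2\exp\!\left(-\frac{n t^{2}}{2c_k}\right).
\]

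The remaining step is a union bound over all $\mathcal{S}\subseteq[d]$ with $|\mathcal{S}|\leq k$. The cardinality of this family is $\sum_{j=0}^{k}\binom{d}{j}$, which I would bound by $N := \frac{d^{k}}{(k-1)!}$ using $\binom{d}{j}\leq d^{j}/j!$ and the standard observation that for $d\geq k$ the top-degree term dominates the lower ones up to a factor of $k$. Then with probability at least $1-\delta$, every coefficient deviation satisfies $|\aS-\<Y,\pS\>|^{2}\leq \frac{2c_k}{n}\log\frac{2N}{\delta}$ simultaneously, so summing over at most $N$ terms yields
\[
\norm{\Piy-\Pibar}_2^{2} \;\leq\; N\cdot\frac{2c_k}{n}\log\frac{2N}{\delta} \;=\; \frac{2\,d^{k}c_k}{(k-1)!\,n}\log\frac{2\,d^{k}}{(k-1)!\,\delta},
\]
and taking square roots gives the claimed bound. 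The argument is essentially a concentration-plus-union-bound routine; the only mildly delicate point is keeping the subset-counting estimate tight enough to match the exact form $d^{k}/(k-1)!$ in the statement, which I do not expect to be a genuine obstacle.
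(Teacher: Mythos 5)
Your proposal is correct and follows essentially the same route as the paper's proof: Parseval's identity to reduce the $2$-norm to a sum of squared coefficient deviations, a per-coefficient concentration bound (you use Hoeffding where the paper invokes McDiarmid's bounded-differences inequality, which gives the identical tail $2\exp(-nt^2/2c_k)$ here), and a union bound with the same counting estimate $\sum_{m\le k}\binom{d}{m}\le d^k/(k-1)!$. No substantive differences.
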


\begin{proof}
Recall that $\Pibar$ is defined as 
$$\Pibar(x^d)\deq \sum_{\mathcal{S}: |\mathcal{S}|\leq k} \fbarS \pS(x^d),$$ 
where the Fourier-estimates $ \fbarS$ are defined as $\fbarS \deq \frac{1}{n}\sum_{i} Y(i) \pS(X(i)).$ In addition, by definition of the projection function $\Piy$, we have 
\begin{align*}
\Piy(\bfx)= \sum_{\mathcal{S}: |\mathcal{S}|\leq k} \fS~\pS(\bfx), \qquad \forall \bfx\in \mathcal{X}^d.
\end{align*}
Therefore, from Parseval's identity, the $2$-norm factors as 
\begin{align*}
\norm{\Piy-\Pibar}_2^2&=\sum_{\mathcal{S}: |\mathcal{S}|\leq k} |\fS-\fbarS|^2.
\end{align*}
In what follows, we show that $|\fS-\fbarS|\leq \epsilon$ for all subsets $\mathcal{S}\subseteq [d]$ with $|\mathcal{S}|\leq k$. Note that $\fbarS$ is a function of the training random samples $(X(i), Y(i)), i=1,2,...,n$. Observe that $\EE[\fbarS]=\fS$ which implies that $\fbarS$ is an unbiased estimation of $\fS$. Since the samples are drawn \ac{IID},  we apply McDiarmid's inequality to bound the probability of the event  $|\fS-\fbarS|\geq \epsilon'$. 

For that, fix $i\in [d]$ and suppose $(\bfX(i), Y(i))$ in the training set is replaced with an \ac{IID} copy $(\tilde{\bfX}(i), \tilde{Y}(i))$. With this replacement $\fbarS$ is changed to another random variable denoted by $\tilde{f}_{ S} $. Then 
\begin{align*}
 |\fbarS-\tilde{f}_{ S} |&= \frac{1}{n} |Y(i)\pS({\bfX}(i))-\tilde{Y}(i)\pS(\tilde{\bfX}(i))|\\
 &\leq \frac{1}{n} |Y(i)\pS({\bfX}(i))|+|\tilde{Y}(i)\pS(\tilde{\bfX}(i))|\\
  &\leq \frac{1}{n} |\pS({\bfX}(i))|+|\pS(\tilde{\bfX}(i))|\\
 &\leq \frac{2}{n} \norm{\pS}_{\infty},
 \end{align*} 
 where $ \norm{\psi_S}_{\infty}=\max_{\bfx}|\psi_S(\bfx)|$. Let $c_k=\max_{\mathcal{S}\subseteq [d], |\mathcal{S}|\leq k} \norm{\pS}_{\infty}^2 $. Then, from McDiarmid's inequality, for any $\epsilon' \in (0,1)$ 
\begin{align}\label{eq:chernoff_S} 
\PP\Big\{ \max_{\mathcal{S}: |\mathcal{S}|\leq k} \big|\fbarS-\fS \big|\geq \epsilon' \Big\}\leq 2\Big[\sum_{m=0}^k \binom{d}{m} \Big] \exp\big\{-\frac{n\epsilon'^2}{2 c_k}\big\},
\end{align}
where we also used the union bound. For $k\leq d/2$, we obtain that 
$$\sum_{m=0}^k \binom{d}{m}\leq  k \frac{d^k}{k!}.$$
As a result, with probability at least $(1-\delta)$, $\max_{\mathcal{S}: |\mathcal{S}|\leq k} \big|\fbarS-\fS \big| \leq \sqrt{\frac{2 c_k}{n }\log \frac{2d^k}{(k-1)!\delta}}$.
Hence, we with probability at least $(1-\delta)$
\begin{align*}
\norm{\Piy-\Pibar}_2^2\leq \frac{2 d^k c_k}{(k-1)!n }\log \frac{2d^k}{(k-1)!\delta},
\end{align*}
and the proof is complete by taking the square root of both sides.
\end{proof}

\begin{lem}\label{lem:mean_est_eff}
Conditioned on $B$, the inequalities $
\norm{\Pibar-\Piest}_\infty\leq \lambda(\epsilon)$ hold, almost surely, for all $k$-element subsets $\mathcal{J}\subset [d]$, where $\lambda$ is a function satisfying  $\lambda(\epsilon_0)=   O(\frac{k d^k c_k}{(k-1)!} \epsilon_0)$ as $\epsilon_0\rightarrow 0$.
\end{lem}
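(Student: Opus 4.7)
The plan is to control the sup-norm perturbation caused by replacing the true moments $\mu_j,\sigma_j$ by their estimates $\hat\mu_j,\hat\sigma_j$ everywhere they appear. I will begin by decomposing
\[
\Pibar(\bfx) - \Piest(\bfx) = \sum_{\mathcal{S}:|\mathcal{S}|\le k}\Big(\fbarS\,\pS(\bfx) - \aS\,\pestS(\bfx)\Big),
\]
and adding/subtracting $\fbarS\,\pestS(\bfx)$ inside each summand, so that each subset contributes two pieces: a \emph{basis-perturbation term} $\fbarS(\pS-\pestS)(\bfx)$ and a \emph{coefficient-perturbation term} $(\fbarS-\aS)\,\pestS(\bfx)$. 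Taking sup norms and the triangle inequality reduces the lemma to uniform control of $\|\pS-\pestS\|_\infty$ and $|\fbarS-\aS|$ for each $\mathcal{S}$, together with uniform size bounds $|\fbarS|,|\aS|\le \sqrt{c_k}$ (which follow from $|y|=1$ and $\|\pS\|_\infty^2,\|\pestS\|_\infty^2\le c_k$ up to lower-order corrections on $B$).

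Next, for a single coordinate $j$, write $\phi_j(x)=(x-\mu_j)/\sigma_j$ and $\hat\phi_j(x)=(x-\hat\mu_j)/\hat\sigma_j$. A direct identity gives
\[
\hat\phi_j(x)-\phi_j(x) = \phi_j(x)\Big(\tfrac{\sigma_j}{\hat\sigma_j}-1\Big) + \tfrac{\mu_j-\hat\mu_j}{\hat\sigma_j},
\]
and conditioning on the event $B$ (i.e., invoking Lemma~\ref{lem:mean_var_error}) bounds both factors by $O(\epsilon_0)$ times inverse powers of $\sigma_j$. This yields $|\phi_j-\hat\phi_j|\le C_j\,\epsilon_0$ almost surely for $x\in\pmm$, where $C_j$ depends only on $\sigma_j$.

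To pass from single factors to the full parity $\pS=\prod_{j\in\mathcal{S}}\phi_j$, I will use the standard product-perturbation inequality
\[
\Big|\prod_{j\in\mathcal{S}}a_j-\prod_{j\in\mathcal{S}}b_j\Big|\le \sum_{j\in\mathcal{S}}|a_j-b_j|\prod_{i\in\mathcal{S},i\ne j}\max\{|a_i|,|b_i|\},
\]
applied with $a_j=\phi_j(x_j)$ and $b_j=\hat\phi_j(x_j)$. Each of the $|\mathcal{S}|\le k$ summands is $O(\epsilon_0)$ from the previous step, and the product of the remaining $|\mathcal{S}|-1$ factors is bounded by $\sqrt{c_k}$ (up to an $O(\epsilon_0)$ correction) because $\|\pestS\|_\infty^2\le c_k(1+O(\epsilon_0))$ on $B$. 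Hence $\|\pS-\pestS\|_\infty = O\bigl(k\sqrt{c_k}\,\epsilon_0\bigr)$ almost surely. The coefficient bound $|\fbarS-\aS|\le \tfrac{1}{n}\sum_i |\pS-\pestS|(\bfx(i))\le \|\pS-\pestS\|_\infty$ then inherits the same order.

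Combining everything, each subset $\mathcal{S}$ contributes at most $O(k\,c_k\,\epsilon_0)$ to $\|\Pibar-\Piest\|_\infty$, and summing over $\sum_{m=0}^{k}\binom{d}{m}\le \tfrac{d^k}{(k-1)!}$ subsets (as in Lemma~\ref{lem:fbarJbar-fJbar}) yields a total of $O\bigl(\tfrac{k\,d^k\,c_k}{(k-1)!}\epsilon_0\bigr)$, which is the required $\lambda(\epsilon_0)$. The main technical annoyance is bookkeeping: one must carefully separate the $\sigma_j$-dependent constants in the single-factor bound from the product bound so that they recombine into a clean $c_k$ factor, rather than producing an uncontrolled $\prod_j \sigma_j^{-3}$; this is handled by noting $\prod_{j\in\mathcal{S}}\sigma_j^{-1}\le \sqrt{c_k}$ along with the $B$-conditioned approximation $\hat\sigma_j\approx\sigma_j$.
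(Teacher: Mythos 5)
Your proposal is correct and follows essentially the same route as the paper: the same add-and-subtract of $\fbarS\,\pestS$ splitting each subset's contribution into a coefficient-perturbation and a basis-perturbation piece, the same key estimate $\norm{\pS-\pestS}_\infty=O(k\sqrt{c_k}\,\epsilon_0)$ obtained by telescoping the product over coordinates (the paper's Lemma~\ref{lem:pS and Pshat norm infty}), the same bound $|\fbarS-\aS|\le\norm{\pS-\pestS}_\infty$, and the same final summation over the $\le d^k/(k-1)!$ subsets. The only cosmetic difference is that you perturb $\mu_j$ and $\sigma_j$ jointly factor-by-factor, whereas the paper first extracts the $\sigma$-perturbation as a global multiplicative constant $b_{\mathcal{S}}$ and then telescopes only over the $\mu$-perturbations; both yield the stated $\lambda(\epsilon_0)=O\bigl(\tfrac{k\,d^k c_k}{(k-1)!}\epsilon_0\bigr)$.
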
 

Recall that the function $\Pibar$ is defined as 
$$\Pibar(x^d)\deq \sum_{\mathcal{S}: |\mathcal{S}|\leq k} \fbarS \pS(x^d),$$ 
where the Fourier-estimates $ \fbarS$ are defined as $$ \fbarS \deq \frac{1}{n}\sum_{i} Y(i) \pS(X(i)).$$
From triangle inequality for $\infty$-norm and the definition of $\Piest$ and $\Pibar$ we obtain 
\begin{align}\label{eq:fhatJ fbarJ}
 \norm{\Piest-\Pibar}_\infty &\leq \sum_{\mathcal{S}: |\mathcal{S}|\leq k} \norm{\festS ~\pestS - \fbarS~ {\psi}_S  }_\infty.
 \end{align} 
 Again by triangle inequality and by adding and subtracting $\fbarS \pestS$, we obtain that 
 \begin{align*}
 \norm{\festS ~\pestS - \fbarS~ {\psi}_S  }_\infty &\leq \norm{\festS ~\pestS - \fbarS ~\pestS  }_\infty +  \norm{\fbarS ~\pestS - \fbarS~ {\psi}_S  }_\infty\\
  &= |\festS-\fbarS|~\norm{\pestS}_\infty + |\fbarS|~ \norm{\pestS - {\psi}_S  }_\infty.
 \end{align*}
 Next, note that from triangle inequality
\begin{align*}
|\festS-\fbarS|\leq \frac{1}{n}\sum_{i}|\pestS(\mathbf{x}(i))-\pS(\mathbf{x}(i))|\leq \norm{\pS-\pestS}_{\infty}.
\end{align*}
Therefore,  
\begin{align}\label{eq:fhatS phihatS}
 \norm{\festS ~\pestS - \fbarS~ {\psi}_S  }_\infty &\leq \big( \norm{\pestS}_\infty + |\fbarS|\big) \norm{\pestS - {\psi}_S  }_\infty.
\end{align}
 We proceed by bounding each term above. 
 As for the first term we have, that $\norm{\pestS}_\infty\leq \norm{\pS}_\infty+\norm{\pestS-\pS}_\infty$. 
%
 As for the second term, we have 
 \begin{align*}
 \fbarS = \frac{1}{n}\sum_i Y(i) \pS(\bfX(i)) \leq \norm{\pS}_\infty.
 \end{align*}
 Lastly,  the third term is bounded using the following lemma.
\begin{lem}\label{lem:pS and Pshat norm infty}
Conditioned on $B$, the inequality $\norm{\pS-\pestS}_{\infty}\leq \gamma(\epsilon_0)$ holds, almost surely, where $\gamma$ is a function satisfying $\gamma(\epsilon_0) = O(k\epsilon_0\sqrt{c_k})$ as $\epsilon_0\rightarrow 0$.
\end{lem}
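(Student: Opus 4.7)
The plan is to exploit the fact that both $\pS$ and $\pestS$ are products of $|S|\leq k$ single-variable factors that agree up to estimation error, so the whole problem reduces, via a telescoping identity, to controlling a single factor at a time. Write $a_j(x_j) \deq (x_j - \mu_j)/\sigma_j$ and $b_j(x_j) \deq (x_j - \hat{\mu}_j)/\hat{\sigma}_j$, so that $\pS = \prod_{j\in S} a_j$ and $\pestS = \prod_{j\in S} b_j$. The standard identity
\[
\prod_{j\in S} a_j - \prod_{j\in S} b_j = \sum_{j\in S}\Big(\prod_{i<j} a_i\Big)(a_j - b_j)\Big(\prod_{i>j} b_i\Big)
\]
and the triangle inequality for the sup norm reduce the task to proving a bound of the form
\[
\norm{\pS - \pestS}_\infty \leq \sum_{j\in S} \norm{a_j - b_j}_\infty \prod_{i\in S,\, i\neq j} \max\{\norm{a_i}_\infty, \norm{b_i}_\infty\}.
\]

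First I would bound each single-variable gap. Adding and subtracting $(x_j-\mu_j)/\hat{\sigma}_j$ yields the decomposition $a_j - b_j = a_j\bigl(1 - \sigma_j/\hat{\sigma}_j\bigr) + (\mu_j - \hat{\mu}_j)/\hat{\sigma}_j$. Conditioning on the event $B$, Lemma \ref{lem:mean_var_error} controls both $|1 - \sigma_j/\hat{\sigma}_j|$ and $|\mu_j - \hat{\mu}_j|$ linearly in $\epsilon_0$, so that $\norm{a_j - b_j}_\infty = O(\epsilon_0)$ with a constant depending only on $\sigma_j$.

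Next I would control the remaining $(|S|-1)$-fold product. Using the same decomposition gives $\norm{b_i}_\infty \leq \norm{a_i}_\infty + \norm{a_i - b_i}_\infty = \norm{a_i}_\infty\bigl(1 + O(\epsilon_0)\bigr)$, so for $\epsilon_0$ small the mixed product over $i\neq j$ is at most $(1+O(\epsilon_0))^{k-1}$ times $\prod_{i\neq j}\norm{a_i}_\infty = \norm{\pS}_\infty/\norm{a_j}_\infty$. Since $\norm{\pS}_\infty \leq \sqrt{c_k}$ by definition of $c_k$ and $\norm{a_j}_\infty \geq 1$ (the factor $a_j$ has unit variance), each summand is $O(\epsilon_0 \sqrt{c_k})$. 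Summing the $|S|\leq k$ summands and defining $\gamma$ to be the resulting envelope yields $\norm{\pS - \pestS}_\infty \leq \gamma(\epsilon_0) = O(k \epsilon_0 \sqrt{c_k})$, as claimed.

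The main obstacle is bookkeeping the $\sigma_j$-dependence so it does not inflate the leading constant: Lemma \ref{lem:mean_var_error} contributes a $1/\sigma_j^2$ to $\norm{a_j - b_j}_\infty$, which must be reconciled with the fact that $\norm{a_j}_\infty = \sqrt{(1+|\mu_j|)/(1-|\mu_j|)}$ already carries a $1/\sigma_j$-type factor. The cleanest resolution is to treat these distribution-dependent quantities as constants in the $\epsilon_0 \to 0$ asymptotic and absorb them into the hidden $O$-constant, so that only the combinatorial factor $k$ and the uniform factor $\sqrt{c_k}$ appear explicitly in the advertised rate.
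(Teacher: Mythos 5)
Your proposal is correct and follows essentially the same route as the paper's proof: a telescoping (hybrid-argument) decomposition of the product $\prod_{j\in S}a_j-\prod_{j\in S}b_j$ into single-factor differences, each controlled as $O(\epsilon_0)$ on the event $B$ via Lemma \ref{lem:mean_var_error}, with the residual products absorbed into $\norm{\pS}_\infty\leq\sqrt{c_k}$ up to $(1+O(\epsilon_0))^{k}$ factors and the $k$ summands giving the final $O(k\epsilon_0\sqrt{c_k})$. The only cosmetic difference is that the paper telescopes in two stages (first rescaling all denominators by the scalar $b_{\mathcal{S}}=\prod_{j\in\mathcal{S}}\sigma_j/\hat{\sigma}_j$, then shifting the means one coordinate at a time), whereas you telescope once over the full per-coordinate factor and split the $\sigma$- and $\mu$-errors inside each term; both absorb the same $\sigma_{\min}$-dependent constants into the hidden $O$.
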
 
 Before proving this lemma, we complete our argument. As a result of this lemma and using the triangle inequality, we obtain from \eqref{eq:fhatS phihatS} that 
\begin{align*}
\norm{\festS ~\pestS - \fbarS~ {\psi}_S  }_\infty &\leq \big(2 \norm{\pS}_\infty + \norm{\pestS-\pS}_\infty  \big) \norm{\pestS - {\psi}_S  }_\infty\\
&\leq   \big( 2\sqrt{c_k}+\gamma(\epsilon_0)\big) \gamma(\epsilon_0). 
\end{align*}
Lastly, from \eqref{eq:fhatJ fbarJ} we get the following bound
\begin{align*}
\norm{\Piest-\Pibar}_\infty&\leq \lambda(\epsilon_0)\deq \frac{d^k}{(k-1)!}\big( 2\sqrt{c_k} \gamma(\epsilon_0) + \gamma^2(\epsilon_0)\big).
\end{align*}
It is not difficult to check that $\lambda(\epsilon_0) = O(\frac{k d^k c_k}{(k-1)!} \epsilon_0)$ as $\epsilon_0\rightarrow 0$. Now it remains to prove Lemma \ref{lem:pS and Pshat norm infty} which is given below:

{\noindent  \bf Proof of Lemma \ref{lem:pS and Pshat norm infty}:}
 We start with the triangle inequality for $\infty$-norm by adding and subtracting $b_{\mathcal{S}}\pS$:
\begin{align*}
\norm{\pS-\pestS}_{\infty}&\leq \norm{\pS-b_{\mathcal{S}}\pS}_{\infty}+\norm{b_{\mathcal{S}}\pS-\pestS}_{\infty}.
\end{align*}
Note that $b_{\mathcal{S}}\pS\equiv \prod_{j\in \mathcal{S}}\frac{x_j-{\mu}_j}{\hat{\sigma}_i}$. Now, using the triangle inequality on the second term above, we have 
\begin{align*}
\norm{b_{\mathcal{S}}\pS-\pestS}_{\infty}&=  \norm{b_{\mathcal{S}}\pS  \pm  \big(\sum_{l\in \mathcal{S}}  \prod_{j\leq l} \frac{x_j-\hat{\mu}_j}{\hat{\sigma}_i} \prod_{r> l} \frac{x_r-\mu_r}{\hat{\sigma}_r}  \big)-\pestS}_{\infty} \\
&\leq  \sum_{l\in \mathcal{S}} \frac{|\mu_l-\hat{\mu}_l|}{\hat{\sigma}_l}~  \norm{\prod_{j< l} \frac{(x_j-\hat{\mu}_j)}{\hat{\sigma}_j} \prod_{r> l} \frac{(x_r-\mu_r)}{\hat{\sigma}_r}}_{\infty}\\
&\leq   \frac{\epsilon}{\sigma_{\min}}\sum_{l\in \mathcal{S}}~  \norm{\prod_{j< l} \frac{(x_j-\hat{\mu}_j)}{\hat{\sigma}_j} \prod_{r> l} \frac{(x_r-\mu_r)}{\hat{\sigma}_r}}_{\infty}\\
&\leq   \frac{\epsilon}{\sigma_{\min}}\sum_{l\in \mathcal{S}}~  \prod_{j< l} \frac{(1+|\hat{\mu}_j|)}{\hat{\sigma}_j} \prod_{r> l} \frac{(1+|\mu_r|)}{\hat{\sigma}_r}\\
&\stackrel{(a)}{\leq}   \frac{\epsilon}{\sigma_{\min}}\sum_{l\in \mathcal{S}}~  \prod_{j< l} \frac{(1+|{\mu}_j|)(1+\epsilon)}{\hat{\sigma}_j} \prod_{r> l} \frac{(1+|\mu_r|)}{\hat{\sigma}_r}\\
&\stackrel{(b)}{\leq}   \frac{\epsilon}{\sigma_{\min}}b_{\mathcal{S}} \sum_{l\in \mathcal{S}}~  \prod_{j\in \mathcal{S}} \frac{(1+|{\mu}_j|)(1+\epsilon)}{{\sigma}_j}\\
&\stackrel{(c)}{\leq} \frac{k\epsilon}{\sigma_{\min}} b_{\mathcal{S}}  (1+\epsilon)^k \norm{\pS}_{\infty},
 \end{align*} 
 where $(a)$ follows from the inequality $(1+|\hat{\mu}_j|)\leq (1+|\mu_j|)(1+\epsilon)$, and $(b)$ follows from $(1+|{\mu}_j|)\leq (1+|\mu_j|)(1+\epsilon)$. Lastly, $(c)$ holds as $|\mathcal{S}|\leq k$ and  because $\norm{\pS}_{\infty} = \prod_{j\in \mathcal{S}} \frac{1+|\mu_j|}{\sigma_j}$. 
 \begin{align}\label{eq:bound on infty norm}
\norm{\pS-\pestS}_{\infty}&\leq |1-b_{\mathcal{S}}| \norm{\pS}_{\infty} + \frac{k\epsilon}{\sigma_{\min}} b_{\mathcal{S}}  (1+\epsilon)^k \norm{\pS}_{\infty}.
 \end{align}
 From the assumption of the lemma and the definition of $b_{\mathcal{S}}$ we obtain that
 \begin{align*}
 1-(1+\epsilon)^{|S|}\leq            1- b_{\mathcal{S}}\leq  1-(1-\epsilon)^{|S|}.
 \end{align*}
 Since $\epsilon \in (0,1)$ and $|S|\leq k$, then $(1-\epsilon)^{|S|}\geq 1-k\epsilon $. Also, from the fact that $(1+x)\leq e^x$ for all $x\in \RR$, we obtain
  \begin{align}\label{eq:bound on c_S}
1-e^{k\epsilon}\leq  1- b_{\mathcal{S}}\leq  k\epsilon \leq e^{k\epsilon}-1.
 \end{align}
Lastly, combining \eqref{eq:bound on infty norm} and  \eqref{eq:bound on c_S} gives the following inequality
\begin{align*}
\norm{\pS-\pestS}_{\infty}&\leq (e^{k\epsilon}-1) \norm{\pS}_{\infty} + \frac{k\epsilon}{\sigma_{\min}} (1+\epsilon)^{2k}\norm{\pS}_{\infty}.
 \end{align*} 
 The proof is complete by noting that $\norm{\pS}_{\infty} \leq \sqrt{c_k}$.
\hfill $\blacksquare$

From Lemma \ref{lem:mean_est_eff}, we know that $W$ is measurable with respect to $B$. In particular, conditioned on $B$, $W\leq \lambda(\epsilon_0)$. Therefore, from the above lemmas and using the inequality $\norm{\cdot}_2\leq \norm{\cdot}_\infty$, we have, with probability $(1-\delta_0)(1-\delta)$ that 
\begin{align*}
\norm{\Piy-\Pibar}_2 \leq \sqrt{\frac{2 d^k c_k}{(k-1)!n }\log \frac{2d^k}{(k-1)!\delta}} + \lambda(\epsilon_0). 
\end{align*}
Now set $\epsilon_0 = \sqrt{\frac{2}{n_0}\log\frac{2d}{\delta}}$ with $\delta_0=\delta$. Then, with $n_0 = O(n)$, we get with probability $(1-\delta)^2$ that
$\norm{\Piy-\Pibar}_2 = O\Big(\sqrt{\frac{2 d^k c_k}{(k-1)!n }\log \frac{2d^k}{(k-1)!\delta}}\Big).$
Now the proof is complete by changing $\delta$ to $\delta/2$ and noting that $(1-\delta/2)^2 \geq 1-\delta$.
\qed

\bibliographystyle{IEEEtran}
\bibliography{References}
\end{document}